\pdfoutput=1
\documentclass[12pt]{article}
\usepackage[utf8]{inputenc} % allow utf-8 input
\usepackage[T1]{fontenc}    % use 8-bit T1 fonts

\usepackage[bitstream-charter]{mathdesign}
\usepackage{amsmath}
\usepackage[scaled=0.92]{PTSans}

\usepackage{placeins}
\usepackage[
  paper  = letterpaper,
  left   = 1.0in,
  right  = 1.0in,
  top    = 1.0in,
  bottom = 1.0in,
  ]{geometry}

\usepackage[usenames,dvipsnames,table]{xcolor}
\definecolor{shadecolor}{gray}{0.9}

\usepackage[final,expansion=alltext]{microtype}
\usepackage[english]{babel}
\usepackage[parfill]{parskip}
\usepackage{afterpage}
\usepackage{framed}

{\endMakeFramed}

\usepackage{lineno}

\usepackage{ragged2e}

\newcounter{parcount}

\usepackage{graphicx}
\usepackage{wrapfig}
\usepackage[labelfont=bf,font=footnotesize,width=.9\textwidth]{caption}
\usepackage[format=hang]{subcaption}

\usepackage{booktabs,multirow,multicol}       % professional-quality tables

\usepackage[algoruled]{algorithm2e}
\usepackage{listings}
\usepackage{fancyvrb}
\fvset{fontsize=\normalsize}

\usepackage[colorlinks,linktoc=all]{hyperref}
\usepackage[all]{hypcap}
\hypersetup{citecolor=MidnightBlue}
\hypersetup{linkcolor=MidnightBlue}
\hypersetup{urlcolor=MidnightBlue}

\usepackage[nameinlink,capitalise]{cleveref}
\Crefname{section}{\S}{\S}

\usepackage[acronym,nowarn]{glossaries}
\lstdefinestyle{mystyle}{
    commentstyle=\color{OliveGreen},
    keywordstyle=\color{BurntOrange},
    numberstyle=\tiny\color{black!60},
    stringstyle=\color{MidnightBlue},
    basicstyle=\ttfamily,
    breakatwhitespace=false,
    breaklines=true,
    captionpos=b,
    keepspaces=true,
    numbers=left,
    numbersep=5pt,
    showspaces=false,
    showstringspaces=false,
    showtabs=false,
    tabsize=2
}
\usepackage{centernot}
\usepackage{amsthm}         % theorem environments
\usepackage{nicefrac}       % compact symbols for 1/2, etc.
\usepackage{mathtools}      % extends amsmath
\usepackage{amsbsy}         % bold math symbols
\usepackage{amstext}        % text in math mode
\usepackage{thmtools}       % theorem tools
\usepackage{thm-restate}    % restatable theorems

\begingroup
    \makeatletter
    \@for\theoremstyle:=definition,remark,plain\do{%
        \expandafter\g@addto@macro\csname th@\theoremstyle\endcsname{%
            \addtolength\thm@preskip\parskip
            }%
        }
\endgroup

\crefname{lemma}{lemma}{lemmas}
\Crefname{lemma}{Lemma}{Lemmas}
\crefname{thm}{theorem}{theorems}
\Crefname{thm}{Theorem}{Theorems}
\crefname{prop}{proposition}{propositions}
\Crefname{prop}{Proposition}{Propositions}
\crefname{assumption}{assumption}{assumptions}
\crefname{assumption}{Assumption}{Assumptions}

\usepackage{arydshln}
\makeatletter
\def\adl@drawiv#1#2#3{%
        \hskip.5\tabcolsep
        \xleaders#3{#2.5\@tempdimb #1{1}#2.5\@tempdimb}%
                #2\z@ plus1fil minus1fil\relax
        \hskip.5\tabcolsep}
\newcommand{\cdashlinelr}[1]{%
  \noalign{\vskip\aboverulesep
           \global\let\@dashdrawstore\adl@draw
           \global\let\adl@draw\adl@drawiv}
  \cdashline{#1}
  \noalign{\global\let\adl@draw\@dashdrawstore
           \vskip\belowrulesep}}
\makeatother

\renewcommand{\epsilon}{\varepsilon}

\declaretheorem[style=plain,numberwithin=section,name=Theorem]{theorem}
\declaretheorem[style=plain,sibling=theorem,name=Lemma]{lemma}

\declaretheorem[style=definition,sibling=theorem,name=Example]{example}

\newenvironment{example*}
 {\pushQED{\qed}\example}
 {\popQED\endexample}
\numberwithin{equation}{section}

\definecolor{WowColor}{rgb}{.75,0,.75}
\definecolor{SubtleColor}{rgb}{0,0,.50}

\newcounter{margincounter}

\usepackage[dvipsnames,table]{xcolor}
\definecolor{shadecolor}{gray}{0.9}
\definecolor{maroon(x11)}{rgb}{0.69, 0.19, 0.38}
\definecolor{lightskyblue}{rgb}{0.53, 0.81, 0.98}

\definecolor{obscolor}{HTML}{0000FF}
\definecolor{latcolor}{HTML}{800000}   % maroon
\definecolor{darkgreen}{HTML}{238B45}

\newcommand{\obs}[1]{\textcolor{obscolor}{#1}}   % observed variable
\newcommand{\lat}[1]{\textcolor{latcolor}{#1}}   % latent variable parameter

\definecolor{sumprodcolor}{HTML}{5A2A83}

\newcommand{\mui}{\hat y_{\obs{\mathbf{i}}}}
\newcommand{\muir}{\hat y_{\obs{\mathbf{i}}, \lat{\mathbf{r}}}}

\newcommand{\yi}{y_{\obs{\mathbf{i}}}}

\newcommand{\ms}{\mathsmaller}
\newcommand{\Ytensor}{{Y}}
\newcommand{\Yhattensor}{\widehat{{Y}}}

\newcommand{\D}{\mathcal{L}}
\newcommand{\Q}{{Q}}

\newcommand{\bi}{\obs{\mathbf{i}}}
\newcommand{\br}{\lat{\mathbf{r}}}

\newcommand{\tpil}{\tilde \theta^{\ms{(\ell)}}_{\bi_{\obs{\ell}},\br_{\lat{\ell}}}}
\definecolor{armygreen}{rgb}{0.29, 0.33, 0.13}
\definecolor{maroon(x11)}{rgb}{0.69, 0.19, 0.38}

\newcommand{\Tms}[1]{\Theta^{\ms{(#1)}}}

\newcommand{\all}{\{\Theta^{\ms{(\ell)}}\}_{\ell=1}^L}

\usepackage{natbib}
\usepackage{algorithmic}
\usepackage[affil-it]{authblk}

\title{Near-Universal Multiplicative Updates for\\
Nonnegative Einsum Factorization}

\date{February 2, 2026} 

\author[1]{John Hood}
\author[1,2]{Aaron Schein}

\affil[1]{Department of Statistics, University of Chicago}
\affil[2]{Data Science Institute, University of Chicago}

\begin{document}
\maketitle

\begin{abstract}
    Despite the ubiquity of multiway data across scientific domains, there are few user-friendly tools that fit tailored nonnegative tensor factorizations. Researchers may use gradient-based automatic differentiation (which often struggles in nonnegative settings), choose between a limited set of methods with mature implementations, or implement their own model from scratch. As an alternative, we introduce NNEinFact, an einsum-based multiplicative update algorithm that fits any nonnegative tensor factorization expressible as a tensor contraction by minimizing one of many user-specified loss functions (including the $(\alpha,\beta)$-divergence). To use NNEinFact, the researcher simply specifies their model with a string. NNEinFact converges to a stationary point of the loss, supports missing data, and fits to tensors with hundreds of millions of entries in seconds. Empirically, NNEinFact fits custom models which outperform standard ones in heldout prediction tasks on real-world tensor data by over $37\%$ and attains less than half the test loss of gradient-based methods while converging up to 90 times faster.
\end{abstract}

\section{Introduction}
Matrix and tensor factorization models serve as fundamental tools for extracting latent structure from high-dimensional multi-way data~\citep{kolda_tensor_2009, cichocki_tensor_2015}. These techniques impose structural constraints—such as low-rank factorizations or sparsity—to compress complex datasets while preserving their essential characteristics. Nonnegative variants of these factorizations~\citep{cichocki2009nonnegative, chi2012tensors} have proven particularly valuable in scientific applications due to their interpretable, parts-based representations, leading to routine use for exploratory and descriptive data analysis.~\looseness=-1

A researcher's choice of factorization model crucially impacts interpretability, ability to recover underlying latent structure, and fit to the data~\citep{kim_nonnegative_2007, kolda_tensor_2009}. Yet scientists face a critical gap: tailored factorizations—essential for capturing domain-specific structure—remain inaccessible to most researchers lacking considerable technical skills. Beyond a handful of algorithms with mature implementations, inference algorithms are typically tailored to individual models, implemented from scratch, scale poorly to large datasets, or require substantial implementation effort and programming ability. General methods based on automatic differentiation typically don't work well in practice, suffering from slow convergence, sensitivity to hyperparameter selection, among other problems~\citep{shalev2017failures}, making it difficult and time-consuming to explore novel models. ~\looseness=-1

\begin{figure*}[!t]
    \centering
\includegraphics[width=\linewidth]{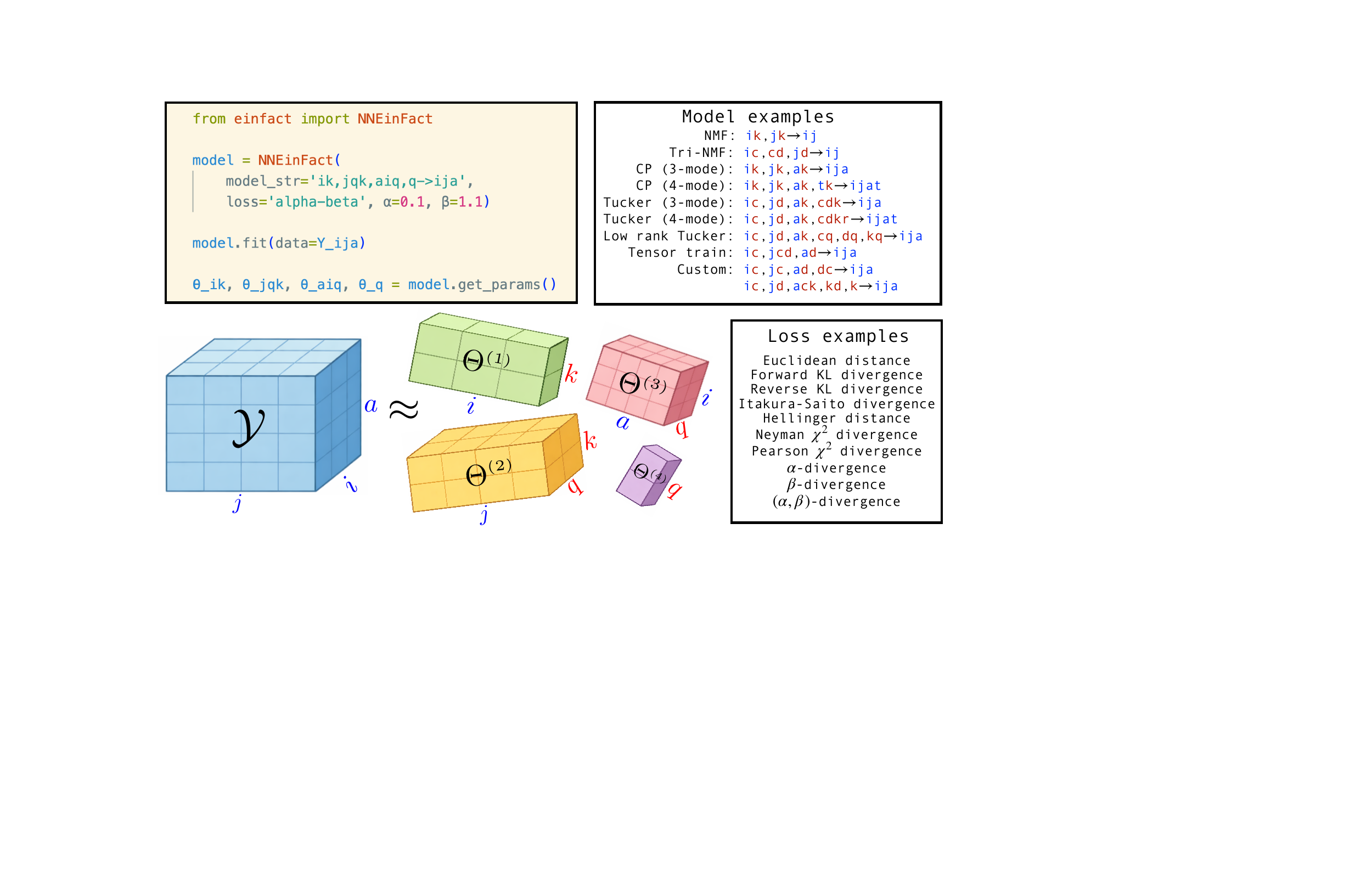}
    \caption{\textbf{Schematic diagram highlighting NNEinFact's simplicity and generality.} Top left: running NNEinFact requires very few lines of code. Bottom left: an example custom tensor decomposition model. The top right panel provides a non-exhaustive list of examples of models that NNEinFact can fit; the bottom right panel provides a non-exhaustive list of loss functions that NNEinFact accommodates. }
    \label{fig:scheme}
\end{figure*}

This paper develops NNEinFact, a method designed to bridge this gap. Centering the einsum operation, the computational backbone of numerous modern machine learning models~\citep{paszke2019pytorch, harris2020array, peharz2020einsum}, NNEinFact fits a wide family of nonnegative tensor factorizations under a general set of loss functions and is remarkably easy to use; see~\Cref{fig:scheme}.~\looseness=-1 

\textbf{Contributions.} The rest of this paper introduces NNEinFact as a general tool for tailored nonnegative tensor factorization. We make the following contributions: 
\begin{enumerate}
\item \textbf{NNEinFact.} A simple, general-purpose nonnegative tensor factorization method built around three calls to the einsum operation that fits a wide family of models under a general set of loss functions.

\item \textbf{Flexible modeling.} Switching between loss functions involves changing one of two parameters, and specifying a tailored tensor decomposition model is done with a string --- allowing the user to efficiently build, fit and refine models~\citep{box1976science, blei2014build}. 
\newpage
\item \textbf{Theoretical guarantees.} We use a majorization-minimization framework to prove NNEinFact's convergence to a stationary point of the objective.

\item \textbf{Empirical performance.} NNEinFact quickly estimates custom tensor decompositions, outperforming gradient-based automatic differentiation (the only practical alternative for estimating such models). These custom models attain substantially lower heldout loss than more common modeling choices.

\item \textbf{Interpretable representations.} In a rideshare pickup case study, we show how NNEinFact efficiently extracts interpretable spatiotemporal structure.
\end{enumerate}

Together, these contributions offer a fresh perspective on modern nonnegative tensor factorization, serving as a foundation for future research in scalable, interpretable, and statistically principled scientific modeling.~\looseness=-1 

\section{Reformulating Generalized Tensor Factorization \\as Einsum Factorization}
We consider the $M$-mode tensor $\Ytensor \in \mathbb{R}_{\geq 0}^{\obs{I_1} \times \dots \times \obs{I_M}}$ (where mode $m$ has dimension ${\obs{I_m}}$) with nonnegative entries ${y_{\obs{i_1, \dots, i_M}}}$, writing $\obs{\bi} = (\obs{i_1}, \dots, \obs{i_M})$. The task of nonnegative tensor decomposition is to approximate $\Ytensor
$ with $\Yhattensor$ such that $y_{\obs{\mathbf{i}}} \approx \hat y_{\obs{\mathbf{i}}}$ for all $\obs{\mathbf{i}}$. We construct $\Yhattensor$ as the \textit{tensor contraction} over $K$ \lat{contracted} modes, where the $k^{\text{th}}$ contracted mode has dimension $\lat{R_k}$. Writing $\lat{\br} = (\lat{r_1}, \dots, \lat{r_K})$ to denote these additional indices, this is simply $\mui = {\sum_{\br}} \muir$.~\looseness=-1

We parameterize $\Yhattensor$ using $L$ vectors, matrices, or higher order tensors $\Tms{1}, \Tms{2}, \dots, \Tms{L}$. For each $\Tms{\ell}$,  its modes may be partitioned into \obs{observed} modes which are shared with $\Ytensor$ and \lat{contracted} modes which are not shared by $\Ytensor$. We use $\obs{\mathbf{i}_\ell}$ to index the observed indices of $\Tms{\ell}$ and $\lat{\mathbf{r}_\ell}$ to index its contracted indices. The family of decompositions we consider, referred to as  \textit{generalized tensor factorizations}~\citep{yilmaz2011generalised}, takes the element-wise form~\looseness=-1  
\begin{align} \label{eq:contraction_structure}
  \mui = {\sum_{\br}} \muir, \quad {\muir = \prod_{\ell=1}^L \theta^{\ms{(\ell)}}_{\obs{\textbf{i}_{\ell}}, \lat{\br_\ell}}}.
\end{align}
Expression \eqref{eq:contraction_structure} is compactly written using \textit{einsum notation}. Under einsum notation, indices which appear on the only left (and not the right) are summed over; the indices on the right specify the observed indices. Conveniently,
\begin{align}\label{eq:einsum}
    \obs{\bi}_{1}\lat{\br_{1}}, \obs{\bi_{2}}\lat{\br_{2}}, \dots, \obs{\bi_{L}}\lat{\br_{L}} \rightarrow \obs{i_1 i_2 \ldots i_M}
\end{align}
corresponds to the einsum notation for~\eqref{eq:contraction_structure} and is compactly expressed as a string in Python. We denote \eqref{eq:einsum} by \texttt{model\_str}. Given parameters $\all$, the operation
\begin{align}\label{eq:einstr}
{\Yhattensor \leftarrow \texttt{einsum}(\texttt{model\_str}, \all)}
\end{align}
efficiently computes $\Yhattensor$. This family includes the canonical choices of Tucker~\citep{tucker_mathematical_1966}, CP~\citep{hitchcock1927expression}, and tensor-train~\citep{oseledets2011tensor}, among others.~\looseness=-1

\textbf{Example 1: Tucker.}
The Tucker decomposition of multi-rank $(\lat{R_1}, \lat{R_2}, \ldots, \lat{R_M})$ is defined as
\begin{align}\label{eq:tucker}
    \mui = \sum_{\lat{r_1}=1}^{\lat{R_1}} \ldots \sum_{\lat{r_M}=1}^{\lat{R_M}} \muir, \quad \muir = \theta_{\br} \prod_{m=1}^M \theta_{\obs{i_m}, \lat{r_m}}^{\ms{(m)}}.
\end{align}
It consists of factor matrices $\Theta^{\ms{(m)}} \in \mathbb{R}^{\obs{I_m} \times \lat{R_m}}$ and \textit{core tensor} $\Theta \in \mathbb{R}^{\lat{R_1} \times \dots \times \lat{R_M}}$. The model string is given by 
   $ \obs{i_1} \lat{r_1}, \dots, \obs{i_M}\lat{r_M}, \lat{r_1} \dots \lat{r_M} \to \obs{i_1}\dots \obs{i_M}.$

\textbf{Example 2: CP.}
The rank-$\lat{R}$ CP decomposition is a special case of Tucker, where each mode has latent dimension $\lat{R_m} = \lat{R}$, and the core tensor has elements $\theta_{\lat{r_1}, \dots, \lat{r_M}} = 1$ along the diagonal and $0$ otherwise. It takes the form
\begin{align} \label{eq:cp}
   \mui = \sum_{\lat{r}=1}^{\lat{R}} \hat{y}_{\bi, \lat{r}}, \quad \hat{y}_{\bi, \lat{r}} = \prod_{m=1}^M  \theta^{\ms{(m)}}_{\obs{i_m}, \lat{r}}
\end{align} and has simplified model string given by $
    \obs{i_1} \lat{r}, \dots, \obs{i_M}\lat{r} \to \obs{i_1}\dots \obs{i_M}.$

\textbf{Example 3: Tensor-train. }
The tensor-train decomposition, with ${\lat{R_1} = \lat{R_{M+1}} = 1}$, is given by the string ${
    \obs{i_1} \lat{r_1 r_2}, \obs{i_2}\lat{r_2 r_3} \dots, \obs{i_M}\lat{r_M r_{M+1}} \to \obs{i_1}\dots \obs{i_M}}$ and takes the form
\begin{align}\label{eq:tt}
    \mui = \sum_{\lat{r_1}=1}^{\lat{R_1}} \ldots \sum_{\lat{r_{M+1}}=1}^{\lat{R_{M+1}}} \muir,\quad \muir = \prod_{m=1}^{M} \theta^{\ms{(m)}}_{\obs{i_m}, \lat{r_m}, \lat{r_{m+1}}}.
\end{align}

While these examples correspond to common decompositions, we emphasize how this family extends \textit{beyond} them.

\textbf{Custom examples.} Recent work~\citep{aguiar2024tensor, hood2024ell_0} develops variants of the nonnegative Tucker decomposition to model complex network data. Tucker has conceptual appeal, yet it suffers from the curse of dimensionality in its core tensor, which scales exponentially in its elements with the number of modes $M$.  Further parameterizing the core tensor by a rank-$\lat{R}$ CP decomposition such that~\looseness=-1 
\begin{align}\label{eq:core_decomp}
    \theta_{\lat{\mathbf{r}}} = \sum_{\lat{r} = 1}^{\lat{R}}  \prod_{m=1}^M \theta^{(M + m)}_{\lat{r_m r}}
\end{align}
reduces the number of core tensor parameters from $\ms{\prod}_{m=1}^M \lat{R_m}$ to ${\lat{R}(\sum_{m=1}^M \lat{R_m})}$, a quantity linear in $M$. 
This modification corresponds to the string $
    \obs{i_1} \lat{r_1}, \!\dots\!, \obs{i_M}\lat{r_M}, \lat{r_1r},\dots,\lat{r_Mr}  \to \obs{i_1}\dots\obs{i_M}.$~\looseness=-1
    
We can construct other decompositions, such as the many-body approximation~\citep{ghalamkari2023many}, which consists of matrices corresponding to pairs of observed indices. For the three-mode setting, $\obs{i_1 i_2}, \obs{i_2 i_3}, \obs{i_1 i_3} \to \obs{i_1 i_2 i _3}$.
Additionally, the custom models 
\begin{align*}
\obs{i_1}\lat{r_1}, \obs{i_2 i_3}\lat{r_1} &\to \obs{i_1 i_2 i _3}\\
    \obs{i_1}, \obs{i_2}\lat{r_1}, \obs{i_3}\lat{r_1} &\to \obs{i_1 i_2 i _3}\\ 
    \obs{i_1}\lat{r_1}, \obs{i_2}\lat{r_2},\obs{i_3}\lat{r_2},\lat{r_1r_2}  &\to \obs{i_1 i_2 i _3}
\end{align*}
    are all members of this family. Given the scope of \eqref{eq:contraction_structure}, we turn to the problem of estimating \textit{any} factorization of this form.~\looseness=-1
\label{sec:gen}

\section{Near-Universal Multiplicative Updates}\label{sec:algo}
For any parameterization of $\Yhattensor$ by expression~\eqref{eq:contraction_structure} and loss function $\D$, we consider the minimization problem 
\begin{align} \label{eq:obj}
    \min_{\Theta^{\ms{(1)}}, \dots, \Theta^{\ms{(L)}}}\D(\Ytensor , \Yhattensor) = \sum_{\obs{\mathbf{i}}} \D(\yi, \mui), \quad \Tms{\ell} \geq \epsilon
\end{align}
(for a very small $\epsilon > 0$) to estimate $\all$. The general nonconvexity of \eqref{eq:obj} motivates an iterative algorithm that updates $\Tms{\ell}$ while fixing $\Tms{\ell'}$ for all $\ell' \neq \ell$. Objective~\eqref{eq:obj} is broad; we consider differentiable loss functions which satisfy a certain decomposability property stated in~\Cref{thm:update}. This decomposability property ensures the convergence of~\Cref{alg:mu}.~\looseness=-1 

\begin{theorem}\label{thm:update}
Suppose that  $\D(x,y)$ is differentiable in its second argument with partial derivative map $\partial_y \D$, has a convex-concave decomposition
\begin{align}\label{eq:loss}
    \D(x,y) = \D^{\text{vex}}(x, y) + \D^{\text{cave}}(x, y)
\end{align}
with respect to $y$,
and satisfies the decomposability property 
\begin{align}\label{eq:prop}
    \partial_{y} \D^{\text{vex}}(x, \lambda y) \!+\! \partial_{y}\D^{\text{cave}}(x, y)\!=\! c(\lambda)[g(\lambda) b(x, y) \!-\! a(x, y)]
\end{align}
for $\lambda>0$, $a(x,y)$, $b(x,y)$, and $g(\lambda)$, where $g:\mathbb{R}^+\to\mathbb{R}$ is invertible onto its image.
Then $\D(\Ytensor,\Yhattensor)$ is non-increasing under the multiplicative update
\begin{align}\label{eq:explicit-update}
\Tms{\ell} \leftarrow \max\left(\epsilon, \Tms{\ell} \odot g^{-1}\!\left(
\frac{\sum_{\obs{\mathbf{i}}} [\nabla_{\Tms{\ell}}\mui] a(\yi,\mui)}
{\sum_{\obs{\mathbf{i}}} [\nabla_{\Tms{\ell}}\mui] b(\yi,\mui)}
\right)\right),
\end{align}
provided that the argument of $g^{-1}$ in~\eqref{eq:explicit-update} lies in $\mathrm{Range}(g)$.
\end{theorem}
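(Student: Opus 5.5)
Fix $\ell$ and all $\Tms{\ell'}$ with $\ell'\neq\ell$, and write $\tilde\Theta=\Tms{\ell}$ for the current iterate and $\Theta=\Tms{\ell}$ for the variable. The plan is a standard majorization--minimization argument. We build an auxiliary function $G(\Theta\mid\tilde\Theta)$ with $G(\Theta\mid\tilde\Theta)\ge \D(\Ytensor,\Yhattensor(\Theta))$ and $G(\tilde\Theta\mid\tilde\Theta)=\D(\Ytensor,\Yhattensor(\tilde\Theta))$. We then show that \eqref{eq:explicit-update} minimizes $G$ over $\{\Theta\ge\epsilon\}$, or at least does not increase it. The key structural fact, read off from \eqref{eq:contraction_structure}, is that each summand $\muir$ is linear in $\Theta$: it equals $\theta_{\bi_\ell,\br_\ell}$ times a nonnegative coefficient $\phi_{\bi,\br}$ built from the other factors. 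Hence $\mui$ is linear in $\Theta$, and $\partial\mui/\partial\theta_j=\sum_{\br:\,(\bi_\ell,\br_\ell)=j}\phi_{\bi,\br}$.

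\textbf{Step 1: majorize the convex part (Jensen).} Define weights $w_{\bi,\br}=\tilde y_{\bi,\br}/\tilde y_{\bi}$, where tildes denote evaluation at $\tilde\Theta$. Then
$\mui=\sum_{\br}w_{\bi,\br}\,\big(\tilde y_{\bi}\,\theta_{j(\bi,\br)}/\tilde\theta_{j(\bi,\br)}\big)$, where $j(\bi,\br)$ denotes the entry of $\Theta$ appearing in $\muir$. Convexity of $\D^{\text{vex}}$ in $y$ gives
\[
\D^{\text{vex}}(\yi,\mui)\le\sum_{\br}w_{\bi,\br}\,\D^{\text{vex}}\!\Big(\yi,\tilde y_{\bi}\,\tfrac{\theta_{j}}{\tilde\theta_{j}}\Big).
\]

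\textbf{Step 2: majorize the concave part (tangent line).} Concavity gives
$\D^{\text{cave}}(\yi,\mui)\le \D^{\text{cave}}(\yi,\tilde y_{\bi})+\partial_y\D^{\text{cave}}(\yi,\tilde y_{\bi})(\mui-\tilde y_{\bi})$, and the right-hand side is linear in $\Theta$.

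Summing both bounds over $\bi$ gives $G$, and equality at $\Theta=\tilde\Theta$ is immediate. Crucially, $G$ separates across the entries $\theta_j$, and each one-dimensional piece is convex in $\theta_j$.

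\textbf{Step 3: stationarity of $G$.} Set $\lambda_j=\theta_j/\tilde\theta_j$. Differentiating $G$ in $\theta_j$ and using $w_{\bi,\br}\tilde y_{\bi}/\tilde\theta_j=\tilde y_{\bi,\br}/\tilde\theta_j=\phi_{\bi,\br}$ yields
\[
\sum_{\bi}\Big[\textstyle\sum_{\br\mapsto j}\phi_{\bi,\br}\Big]\Big(\partial_y\D^{\text{vex}}(\yi,\lambda_j\tilde y_{\bi})+\partial_y\D^{\text{cave}}(\yi,\tilde y_{\bi})\Big)=0 .
\]
By \eqref{eq:prop} with $\lambda=\lambda_j$, the bracketed sum equals $c(\lambda_j)[g(\lambda_j)b-a]$. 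Provided $c(\lambda_j)\neq0$, we can factor it out, which leaves
\[
g(\lambda_j)=\frac{\sum_{\bi}[\nabla\mui]_j\,a}{\sum_{\bi}[\nabla\mui]_j\,b}.
\]
Invertibility of $g$ together with the range assumption gives $\lambda_j=g^{-1}(\cdot)$. This is exactly \eqref{eq:explicit-update} before clipping.

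\textbf{Step 4: handle the constraint.} Because each one-dimensional piece of $G$ is convex in $\theta_j$, its constrained minimizer over $[\epsilon,\infty)$ is the projection $\max(\epsilon,\cdot)$ of the unconstrained stationary point. Therefore
$\D(\Theta^{\text{new}})\le G(\Theta^{\text{new}}\mid\tilde\Theta)\le G(\tilde\Theta\mid\tilde\Theta)=\D(\tilde\Theta)$.

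\textbf{Main obstacle.} The delicate step is justifying that the stationary point found in Step 3 is actually a minimizer, that is, that $\partial G/\partial\theta_j$ changes sign from negative to positive there. Convexity of the Jensen term settles this only if the dependence on $\lambda$ enters through the convex part, and it also requires $c(\lambda)$ to be nonvanishing and of a consistent sign. I would argue directly from convexity: the derivative of the piece is nondecreasing in $\lambda_j$ and vanishes at the $g^{-1}$ point, so that point is a global minimizer of the convex piece. If necessary, I would add the mild implicit assumption that $c(\lambda)>0$, which holds for the $(\alpha,\beta)$-divergence examples.
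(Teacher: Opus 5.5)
Your proposal is correct and follows essentially the same route as the paper. It uses the surrogate of \Cref{lem:surrogate}: Jensen on $\D^{\text{vex}}$ with weights $\tilde y_{\mathbf{i},\mathbf{r}}/\tilde y_{\mathbf{i}}$, which the paper aggregates over $\mathbf{r}\setminus\mathbf{r}_\ell$, plus the tangent-line bound on $\D^{\text{cave}}$. It then zeroes the gradient of this convex, entrywise-separable surrogate via \eqref{eq:prop} and clips at $\epsilon$ as in \Cref{lem:convex}, followed by the standard MM chain. Your worry about $c(\lambda)$ is unnecessary, as you yourself conclude: the gradient is $c(\lambda_j)$ times $g(\lambda_j)$ times the denominator sum minus the numerator sum, which vanishes at the $g^{-1}$ point whatever $c$ is, and convexity then makes that point the global minimizer.
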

A quick evaluation of~\eqref{eq:prop} shows that many loss functions possess the decomposability property, including the $(\alpha,\beta)$-divergence~\citep{cichocki2010families}, which includes the squared Euclidean distance, KL divergence, reverse KL divergence, Itakura–Saito divergence, $\alpha$-divergence, $\beta$-divergence~\citep{basu1998robust}, squared Hellinger distance, Pearson and Neyman $\chi^2$ divergences as special cases. As such, our work unifies and extends much existing work; we elaborate on these connections in~\Cref{sec: related-work}. Beyond these, the Bernoulli, binomial, negative binomial, and geometric distributions all have negative log-likelihoods which satisfy this form of decomposability under specific reparameterizations outlined in Appendix~\ref{sec:apx-alg}. ~\looseness=-1 

We defer proof of monotonicity to~\Cref{sec:theory} and instead focus on the computation of~\eqref{eq:explicit-update}. We can compute $\Yhattensor$ using the einsum expression~\eqref{eq:einstr}. The loss-dependent functions $b(x,y), a(x,y)$ and $g^{-1}(x)$ are often remarkably simple and cheap to compute. For example, the least-squares loss yields $b(x,y) = x$, $a(x,y) = y$, and $g(x) = x$. ~\looseness=-1

 Crucially, the numerator and denominator of~\eqref{eq:explicit-update} are \textit{also} neatly expressible using einsum. Consider the numerator ${\sum_{\obs{\mathbf{i}}} [\nabla_{\Tms{\ell}}\mui] a(\yi,\mui)}$. $\nabla_{\Theta} \mui$ has element-wise form 
\begin{align}\label{eq:partial}
\frac{\partial \mui}{\partial \theta_{\bi_{\obs{\ell}}, \br_{\lat{\ell}}}} = \sum_{\br}\mathbf{1}(\bi_{\obs{\ell}} \subseteq \bi, \br_{\lat{\ell}} \subseteq \br)\prod_{\ell'\neq \ell} \theta_{\obs{\bi_{\ell'}}, \lat{\br_{\ell'}}}^{\ms{(\ell')}},
\end{align}
yielding the expression
\begin{align}
    \sum_{\bi, \br} a(\yi, \mui) \mathbf{1}(\bi_{\obs{\ell}} \subseteq \bi, \br_{\lat{\ell}} \subseteq \br)\prod_{\ell'\neq \ell} \theta_{\obs{\bi_{\ell'}}, \lat{\br_{\ell'}}}^{\ms{(\ell')}}\label{eq:num}.
\end{align} 
 This is an einsum with string $\texttt{einstr}_\ell$, defined as: 
 \begin{align}
     \texttt{einstr}_\ell := {\obs{\bi_1}\lat{\br_1}, \dots, \obs{\bi_{\ell-1}} \lat{\br_{\ell-1}}, \obs{\bi}, \obs{\bi_{\ell+1}} \lat{\br_{\ell+1}},
    \dots, \obs{\bi_{L}} \lat{\br_{L}} \rightarrow  {\obs{\bi_{\ell}} \lat{\br_{\ell}}}}.
 \end{align}
In particular, \eqref{eq:num} is exactly expressed as
\begin{align}
     \sum_{\obs{\mathbf{i}}} &[\nabla_{\Tms{\ell}}\mui] a(\yi, \mui) = \texttt{einsum}(\texttt{einstr}_{\ell}, \Theta^{\ms{(1)}}, \dots, \Theta^{\ms{(\ell-1)}}, a(\Ytensor, \Yhattensor) , \Theta^{\ms{(\ell+1)}}, \dots, \Theta^{\ms{(L)}}). 
\end{align}
The denominator is nearly identical, except that $b(\Ytensor, \Yhattensor)$ replaces $a(\Ytensor, \Yhattensor)$ (where $a$ and $b$ are applied element-wise). To create $\texttt{einstr}_\ell$, we implement $\texttt{swap}(\texttt{model\_str}, \ell)$ which swaps the model output $\obs{\mathbf{i}}$ with the $\ell^{\text{th}}$ entry $\bi_{\obs{\ell}} \br_{\lat{\ell}}$.
 The full iterative algorithm is given in~\Cref{alg:mu}, which applies update~\eqref{eq:explicit-update} to each $\Tms{\ell}$ until convergence.~\looseness=-1

 \begin{algorithm}[!h]
\caption{NNEinFact: multiplicative update algorithm}
\label{alg:mu}
\begin{algorithmic}[1]
\REQUIRE data $\Ytensor$, model string \texttt{model\_str},
initial parameters $\{\Theta^{\ms{(1)}}, \dots, \Theta^{\ms{(L)}}\}$,
loss function $\D$, numerical precision parameter $\epsilon > 0$
\FOR{$\ell = 1, \dots, L$}
    \STATE $\texttt{einstr}_\ell \gets \texttt{swap}(\texttt{model\_str}, \ell)$
\ENDFOR

\WHILE{not converged}
    \FOR{$\ell = 1, \dots, L$}
        \STATE $\Yhattensor \gets
        \texttt{einsum}(\texttt{model\_str},
        \{\Theta^{\ms{(\ell)}}\}_{\ell=1}^L)$

        \STATE $\texttt{A} \gets
        \texttt{einsum}(\texttt{einstr}_{\ell}, \Theta^{\ms{(1)}}, \dots, \Theta^{\ms{(\ell-1)}}, a(\Ytensor, \Yhattensor) , \Theta^{\ms{(\ell+1)}}, \dots, \Theta^{\ms{(L)}})$

        \STATE $\texttt{B} \gets
        \texttt{einsum}(\texttt{einstr}_{\ell}, \Theta^{\ms{(1)}}, \dots, \Theta^{\ms{(\ell-1)}}, b(\Ytensor, \Yhattensor) , \Theta^{\ms{(\ell+1)}}, \dots, \Theta^{\ms{(L)}})$

        \STATE $\Theta^{\ms{(\ell)}} \gets \max(\epsilon, 
        \Theta^{\ms{(\ell)}} \odot
        g^{-1}\!\left(\frac{\texttt{A}}{\texttt{B}}\right))$
    \ENDFOR
\ENDWHILE \\ 
\textbf{return} $\{\Theta^{\ms{(1)}}, \dots, \Theta^{\ms{(L)}}\}$
\end{algorithmic}
\end{algorithm}

\section{Theoretical Guarantees} \label{sec:theory}
We use a majorization-minimization (MM) framework to establish \Cref{alg:mu}'s convergence to a stationary point of the loss $\D$. Given a function to minimize ($\D$), MM offers a principled approach to optimization by iteratively constructing tight upper bounds, or surrogate functions, $\Q$, that \textit{majorize} $\D$ and minimizing them. Iteratively majorizing and minimizing yields a convergent algorithm that monotonically decreases $\D$. ~\looseness=-1

Formally, the function $\Q:\text{dom}(\Theta) \times \text{dom}(\Theta) \to \mathbb{R}_{\geq 0}$ is a \textit{surrogate function} to $\D$ iff for all $\Tms{\ell}, \widetilde \Theta^{\ms{({\ell})}}\in \text{dom}(\Theta)$,
\begin{align}\label{eq:surrogate}
\Q(\Tms{\ell} \mid \widetilde \Theta^{\ms{({\ell})}}) \geq \D(\Tms{\ell}) = \Q(\Tms{\ell} \mid \Tms{\ell}).
\end{align}  The surrogate property ensures that at each step, minimizing $\Q$ decreases $\D$. By decomposing $\D$ into convex and concave components as in~\eqref{eq:loss} and upper bounding each component individually, we construct a surrogate function to $\D$. The exact form of $\Q$ is given in Lemma~\ref{lem:surrogate}.~\looseness=-1
\begin{lemma} \label{lem:surrogate}
Consider the differentiable convex-concave decomposition of $\D(x , y)$ in \eqref{eq:loss} and define $\tilde y_{\bi, \lat{\br_{\ell}}} = \sum_{\br \ms{\setminus} \lat{\br_{\ell}}} \tilde y_{\bi, \br}$ as the sum over latent indices $\lat{r_k} \not \in \lat{\br_{\ell}}$. It holds that $\sum_{\lat{\br_{\ell}}} \tilde y_{\bi, \lat{\br_{\ell}}} = \tilde y_{\bi}$. The function~\looseness=-1
\begin{align}
    \Q&(\Tms{\ell} \mid \widetilde \Theta^{\ms{(\ell)}}) =   \sum_{\bi, \br_{\lat{\ell}}} \frac{\tilde y_{\bi, \br_{\lat{\ell}}}}{\tilde y_{\bi}} \D^{\text{vex}}\left(y_{\bi} , \tilde y_{\bi}\tfrac{\theta_{\bi_{\obs{\ell}},\br_{\lat{\ell}}}^{\ms{(\ell)}}}{\tpil}\right) 
+ \sum_{\bi} \D^{\text{cave}}(y_{\bi} , \tilde y_{\bi}) + \partial_y  \D^{\text{cave}}(y_{\bi} , \tilde y_{\bi}) (\hat y_{\bi} - \tilde y_{\bi}) 
\end{align}
is a surrogate function to $\D(\Tms{\ell})$. 
\end{lemma}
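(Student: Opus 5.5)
We must verify the two defining properties in \eqref{eq:surrogate}: the majorization $\Q(\Tms{\ell}\mid\widetilde\Theta^{\ms{(\ell)}})\ge \D(\Tms{\ell})$ and tightness $\Q(\Tms{\ell}\mid\Tms{\ell})=\D(\Tms{\ell})$. Since $\D=\sum_{\bi}[\D^{\text{vex}}(\yi,\mui)+\D^{\text{cave}}(\yi,\mui)]$, we bound each summand separately, the convex part by Jensen's inequality and the concave part by its tangent line at $\tilde y_{\bi}$. Throughout, $\tilde y$ denotes the reconstruction with $\Tms{\ell}$ replaced by $\widetilde\Theta^{\ms{(\ell)}}$ and all other factors held fixed.

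\textbf{Preliminary identity.} First record the identity $\sum_{\br_{\lat{\ell}}}\tilde y_{\bi,\br_{\lat{\ell}}}=\tilde y_{\bi}$; it is immediate because summing first over $\br\setminus\br_{\lat{\ell}}$ and then over $\br_{\lat{\ell}}$ is summing over all of $\br$. Next use the key structural fact from \eqref{eq:contraction_structure}: $\hat y_{\bi,\br}$ is linear in the single factor entry $\theta^{\ms{(\ell)}}_{\bi_{\obs{\ell}},\br_{\lat{\ell}}}$, since each $\Tms{\ell}$ appears exactly once in the product. Hence
\[
\hat y_{\bi,\br_{\lat{\ell}}}=\tilde y_{\bi,\br_{\lat{\ell}}}\,\theta^{\ms{(\ell)}}_{\bi_{\obs{\ell}},\br_{\lat{\ell}}}/\tpil ,
\]
because the remaining factors, and the sum over $\br\setminus\br_{\lat{\ell}}$, do not involve $\Tms{\ell}$. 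Summing over $\br_{\lat{\ell}}$ gives
\[
\mui=\sum_{\br_{\lat{\ell}}} w_{\bi,\br_{\lat{\ell}}}\,\Big(\tilde y_{\bi}\,\theta^{\ms{(\ell)}}_{\bi_{\obs{\ell}},\br_{\lat{\ell}}}/\tpil\Big),
\qquad w_{\bi,\br_{\lat{\ell}}}=\tilde y_{\bi,\br_{\lat{\ell}}}/\tilde y_{\bi}.
\]
The weights $w_{\bi,\br_{\lat{\ell}}}$ are nonnegative and sum to one over $\br_{\lat{\ell}}$. Positivity of the denominators is guaranteed by the constraint $\Theta\ge\epsilon$.

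\textbf{Convex part.} Since $\D^{\text{vex}}(\yi,\cdot)$ is convex, Jensen's inequality applied to this convex combination gives
\[
\D^{\text{vex}}(\yi,\mui)\le\sum_{\br_{\lat{\ell}}} w_{\bi,\br_{\lat{\ell}}}\,\D^{\text{vex}}\!\Big(\yi,\ \tilde y_{\bi}\,\theta^{\ms{(\ell)}}_{\bi_{\obs{\ell}},\br_{\lat{\ell}}}/\tpil\Big).
\]
Summing over $\bi$ yields the first term of $\Q$.

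\textbf{Concave part.} Since $\D^{\text{cave}}(\yi,\cdot)$ is concave and differentiable, it lies below its tangent at $\tilde y_{\bi}$:
\[
\D^{\text{cave}}(\yi,\mui)\le \D^{\text{cave}}(\yi,\tilde y_{\bi})+\partial_y\D^{\text{cave}}(\yi,\tilde y_{\bi})\,(\mui-\tilde y_{\bi}).
\]
Summing over $\bi$ gives the second group of terms. Adding the two bounds establishes $\Q\ge\D$.

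\textbf{Tightness.} Set $\widetilde\Theta^{\ms{(\ell)}}=\Tms{\ell}$. Every ratio $\theta/\tilde\theta$ equals $1$, so each argument inside $\D^{\text{vex}}$ collapses to $\tilde y_{\bi}=\mui$. The weights then sum to one, returning $\D^{\text{vex}}(\yi,\mui)$ exactly, and the linear correction term $\mui-\tilde y_{\bi}$ vanishes. Hence $\Q(\Tms{\ell}\mid\Tms{\ell})=\D(\Tms{\ell})$.

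The main obstacle is the multilinearity step, i.e.\ justifying the rescaling identity for $\hat y_{\bi,\br_{\lat{\ell}}}$ cleanly in the general einsum setting. The point to check is that the indices of $\Tms{\ell}$, namely $\bi_{\obs{\ell}}$ and $\br_{\lat{\ell}}$, are fixed once $\bi$ and $\br_{\lat{\ell}}$ are fixed. Consequently the single factor $\theta^{\ms{(\ell)}}_{\bi_{\obs{\ell}},\br_{\lat{\ell}}}$ can be pulled out of the inner sum over $\br\setminus\br_{\lat{\ell}}$. The remaining steps are standard.
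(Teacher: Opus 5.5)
Your proposal is correct and follows the paper's proof essentially step for step. You get tightness by collapsing the ratios to $1$, bound the convex part by Jensen's inequality with weights $\tilde y_{\mathbf{i},\mathbf{r}_\ell}/\tilde y_{\mathbf{i}}$ via the rescaling identity $\hat y_{\mathbf{i},\mathbf{r}_\ell}/\tilde y_{\mathbf{i},\mathbf{r}_\ell}=\theta^{(\ell)}/\tilde\theta^{(\ell)}$, and bound the concave part by its tangent line at $\tilde y_{\mathbf{i}}$. Your justification of that identity — the $\ell$-th factor depends only on the fixed indices and so pulls out of the inner sum — makes explicit a step the paper simply asserts.
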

We prove this result in Appendix~\ref{sec:apx}. Moreover, $\Q$ is easily minimized: Lemma~\ref{lem:convex} 
establishes that it attains a minimum under the multiplicative update 
in~\eqref{eq:explicit-update} applied to $\tilde{\Theta}^{(\ell)}$.~\looseness=-1 

\begin{lemma}\label{lem:convex}
    $\Q(\Tms{\ell} \mid \widetilde \Theta^{\ms{(\ell)}})$ is convex in $\Tms{\ell}$. If prop.~(\ref{eq:prop}) holds, then $\Q(\Tms{\ell} \mid \widetilde \Theta^{\ms{(\ell)}})$ is minimized by~\looseness=-1
\begin{equation}
    \Tms{\ell} = \max\left(\epsilon, \widetilde \Theta^{\ms{(\ell)}} \odot 
            g^{-1}\left( 
\frac{\sum_{\obs{\mathbf{i}}} [\nabla_{\Tms{\ell}}\mui] a(\yi, \tilde y_{\obs{\mathbf{i}}})}{\sum_{\obs{\mathbf{i}}} [\nabla_{\Tms{\ell}}\mui] b(\yi, \tilde y_{\obs{\mathbf{i}}})}
            \right)\right)
\end{equation}
where $g^{-1}$ and $/$ are applied element-wise. 
\end{lemma}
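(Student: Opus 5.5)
Both claims follow from looking at $\Q$ term by term as a function of the entries $\theta^{\ms{(\ell)}}_{\bi_{\obs{\ell}},\br_{\lat{\ell}}}$.

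\textbf{Convexity.} Each summand of the first sum is $\D^{\text{vex}}(\yi,\cdot)$ evaluated at $\tilde y_{\bi}\,\theta^{\ms{(\ell)}}_{\bi_{\obs{\ell}},\br_{\lat{\ell}}}/\tpil$. This argument is a nonnegative scalar multiple of a single coordinate of $\Tms{\ell}$. A convex function composed with a linear map is convex, and the weights $\tilde y_{\bi,\br_{\lat{\ell}}}/\tilde y_{\bi}$ are nonnegative. Hence the first sum is convex.

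The second sum is affine in $\Tms{\ell}$. The terms $\D^{\text{cave}}(\yi,\tilde y_{\bi})$ are constants, and $\partial_y\D^{\text{cave}}(\yi,\tilde y_{\bi})$ multiplies $\mui$. By \eqref{eq:contraction_structure}, $\mui$ is linear in $\Tms{\ell}$ because each $\muir$ contains exactly one factor from $\Tms{\ell}$. So $\Q$ is convex. This also shows that $\Q$ is separable across the coordinates of $\Tms{\ell}$: each coordinate enters only through its own summands and linearly through $\mui$.

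\textbf{First-order condition.} Differentiate $\Q$ with respect to a single coordinate $\theta=\theta^{\ms{(\ell)}}_{\bi_{\obs{\ell}},\br_{\lat{\ell}}}$ and write $\lambda=\theta/\tpil$.

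For the convex part, use $\tilde y_{\bi,\br_{\lat{\ell}}}/\tpil = \partial \mui/\partial\theta$ evaluated at $\widetilde\Theta$. This holds because $\tilde y_{\bi,\br_{\lat{\ell}}}$ is $\tpil$ times the product of the other factors, summed as in \eqref{eq:partial}. The derivative of the first sum is then
\[
\sum_{\bi}[\partial_\theta \mui]\,\partial_y\D^{\text{vex}}(\yi,\lambda\tilde y_{\bi}).
\]
Since $\partial_\theta\mui$ does not depend on $\Tms{\ell}$, the concave part contributes $\sum_{\bi}[\partial_\theta\mui]\,\partial_y\D^{\text{cave}}(\yi,\tilde y_{\bi})$. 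Adding the two and applying \eqref{eq:prop} with $x=\yi$, $y=\tilde y_{\bi}$ gives
\[
c(\lambda)\Big[g(\lambda)\sum_{\bi}[\partial_\theta\mui]\,b(\yi,\tilde y_{\bi})-\sum_{\bi}[\partial_\theta\mui]\,a(\yi,\tilde y_{\bi})\Big].
\]
Note that $\lambda$ is common to all $\bi$ in the sum, which is exactly why the surrogate was built coordinatewise. Setting this to zero, and assuming $c(\lambda)\neq0$, gives $g(\lambda)=A/B$. Hence $\lambda=g^{-1}(A/B)$, which is $\theta=\tilde\theta\, g^{-1}(A/B)$ elementwise. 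That is the update in the statement before the clipping.

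\textbf{Constraint and main obstacle.} The remaining step is to handle the constraint $\Tms{\ell}\ge\epsilon$ and to justify that the stationary point is a minimizer. Each coordinate problem is a one-dimensional convex problem, so a stationary point is a global minimizer, and the constrained minimizer over $[\epsilon,\infty)$ is the projection $\max(\epsilon,\cdot)$ of the unconstrained one. Clipping and separability together then give the stated formula.

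I expect the main obstacle to be the sign and degeneracy issues. Specifically, one needs $c(\lambda)$ not to vanish (or at least not to change sign), and one needs the root to exist, which requires $A/B\in\mathrm{Range}(g)$, as assumed in \Cref{thm:update}. I would argue via convexity: the scalar derivative is nondecreasing in $\theta$, so any zero is a minimizer. If there is no zero in $(\epsilon,\infty)$, the derivative is positive there and the minimum is at $\epsilon$, which the clipping produces.
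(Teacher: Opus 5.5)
Your proposal is correct and follows essentially the same route as the paper. Convexity comes from composing $\mathcal{L}^{\text{vex}}$ with a positive rescaling of a single coordinate, plus the affine linearized concave term. The update comes from the coordinatewise first-order condition: identify the weight $\tilde y_{\mathbf{i},\mathbf{r}_\ell}/\tilde\theta$ with $\partial \hat y_{\mathbf{i}}/\partial\theta$, apply \eqref{eq:prop} at $y=\tilde y_{\mathbf{i}}$, invert $g$, and clip at $\epsilon$.

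Your explicit separability remark is a genuine tightening, since it is what justifies coordinatewise clipping, which the paper attributes to convexity alone. The nonvanishing of $c(\lambda)$ that you flag is not actually needed: $g(\lambda)=A/B$ already makes the scalar derivative vanish, and convexity then makes that point a minimizer.
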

See Appendix~\ref{sec:apx} for the proof. This result matches the einsum-based multiplicative update of~\eqref{eq:explicit-update}. Thus, we may establish~\Cref{alg:mu}'s convergence.~\looseness=-1

\begin{theorem}\label{thm:alg}
    If prop. (\ref{eq:prop}) holds, then the iterative process in~\Cref{alg:mu} converges. Every limit point of~\Cref{alg:mu} is a stationary point of objective~\ref{eq:obj}.
\end{theorem}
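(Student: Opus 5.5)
The plan is to cast \Cref{alg:mu} as block-coordinate majorization-minimization and then apply a standard MM/BSUM convergence argument, such as Zangwill's global convergence theorem or the block successive upper-bound minimization framework of Razaviyayn et al. Throughout, let $\Theta_t = (\Tms{1}_t,\dots,\Tms{L}_t)$ denote the iterates after $t$ full sweeps.

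\textbf{Step 1: monotonicity and convergence of the objective.} Fix a sweep and a block $\ell$, and write $\widetilde\Theta^{\ms{(\ell)}}$ for the current value. By \Cref{lem:convex}, the update chosen by the algorithm minimizes $\Q(\cdot\mid\widetilde\Theta^{\ms{(\ell)}})$ over $\{\Tms{\ell}\ge\epsilon\}$. By \Cref{lem:surrogate},
\[
\D(\Tms{\ell}_{\mathrm{new}})\le \Q(\Tms{\ell}_{\mathrm{new}}\mid\widetilde\Theta^{\ms{(\ell)}})\le \Q(\widetilde\Theta^{\ms{(\ell)}}\mid\widetilde\Theta^{\ms{(\ell)}})=\D(\widetilde\Theta^{\ms{(\ell)}}),
\]
with the other blocks held fixed. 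Chaining these inequalities over the blocks shows that $\D(\Ytensor,\Yhattensor_t)$ is non-increasing. For the losses considered (divergences) $\D$ is bounded below, so $\D(\Theta_t)$ converges. This gives the first claim, which I read as "the objective values converge."

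\textbf{Step 2: limit points are stationary.} Let $\bar\Theta$ be a limit of a subsequence $\Theta_{t_j}$. I will verify the BSUM hypotheses block by block.
\begin{enumerate}
\item Tightness: $\Q(\Theta\mid\Theta)=\D(\Theta)$.
\item Majorization, which holds by \Cref{lem:surrogate}.
\item First-order agreement: $\nabla_{\Tms{\ell}}\Q(\cdot\mid\widetilde\Theta)$ evaluated at $\widetilde\Theta$ equals $\nabla_{\Tms{\ell}}\D$ at $\widetilde\Theta$.
\item Joint continuity of $\Q$ in both arguments on the feasible set $\{\Theta\ge\epsilon\}$.
\end{enumerate}
Condition 3 is a direct differentiation. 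The convex term contributes $\sum \tilde y_{\bi,\br_\ell}/\tilde\theta\cdot\partial_y\D^{\mathrm{vex}}(y_{\bi},\tilde y_{\bi})$, which by \eqref{eq:partial} equals $\sum_{\bi}[\nabla\mui]\,\partial_y\D^{\mathrm{vex}}$. The linearized concave term contributes $\sum_{\bi}[\nabla\mui]\,\partial_y\D^{\mathrm{cave}}$. Condition 4 follows because $\epsilon>0$ keeps every $\tilde\theta$, and hence every $\tilde y_{\bi}$, bounded away from zero.

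Given these conditions, the standard argument runs as follows. For each block, $\Q(\Theta\mid\Theta_{t_j})\ge \D(\Theta_{t_j+1})\ge\D(\Theta_{t_{j+1}})$ for all feasible $\Theta$ in that block. Passing to the limit gives $\Q(\Theta\mid\bar\Theta)\ge\D(\bar\Theta)=\Q(\bar\Theta\mid\bar\Theta)$, so $\bar\Theta^{\ms{(\ell)}}$ minimizes the convex surrogate. Its first-order (KKT) conditions on $\{\Theta\ge\epsilon\}$ then transfer to $\D$ through condition 3, so $\bar\Theta$ is a blockwise, and therefore joint, KKT point of \eqref{eq:obj}.

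\textbf{Main obstacle.} The delicate part is the cyclic multi-block limit. Passing to the limit for block $\ell>1$ requires the intermediate iterates (blocks $1,\dots,\ell-1$ already updated) to converge to the same $\bar\Theta$. BSUM handles this by assuming either that the surrogate minimizer is unique or that $\Q$ is strictly convex in each block. I would try to obtain uniqueness from the closed form in \Cref{lem:convex}: $g^{-1}$ is a single-valued function, so the update map $M$ is a continuous function of the current point, and I would apply Zangwill's theorem to the composite map $M=M_L\circ\dots\circ M_1$. Strict descent away from fixed points would then follow from strict convexity of $\D^{\mathrm{vex}}$ in $y$ whenever it holds. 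As a fallback, I would show $\|\Theta_{t+1}-\Theta_t\|\to0$ using a sufficient-decrease bound on $\Q$.

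A final subtlety is the existence of limit points at all. This needs bounded iterates, which I would derive from coercivity of the level sets of $\D$ under the $\epsilon$ floor, or else simply assume.
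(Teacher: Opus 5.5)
Your proposal follows essentially the same route as the paper. You get monotone descent of $\D$ from the surrogate chain built on \Cref{lem:surrogate} and \Cref{lem:convex}, read as convergence of the objective values, and then appeal to Theorem~2 of Razaviyayn et al.\ after checking tightness, majorization, gradient agreement, continuity, and uniqueness of the block minimizer from the closed form. The paper disposes of the subtleties you flag (the cyclic multi-block limit and the need for bounded iterates for limit points to exist) simply by citing BSUM with its uniqueness hypothesis, so your treatment is, if anything, more careful than the original.
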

\begin{proof}
\Cref{alg:mu} iterates over $\ell \in [L]$, setting 
\begin{align}
\Tms{\ell} \leftarrow \arg\min_{\Theta} \Q(\Theta  , \widetilde{\Theta}^{\ms{(\ell)}}).
\end{align}
At each iteration, 
\begin{align}
    \D(\tilde \Theta^{\ms{(\ell)}}) &= \Q(\tilde \Theta^{\ms{(\ell)}} \mid \tilde \Theta^{\ms{(\ell)}}) \underset{(1)}{\geq} 
    \Q(\Tms{\ell} \mid \tilde \Theta^{\ms{(\ell)}}) 
    \underset{(2)}{\geq} \Q(\Tms{\ell} \mid \Theta^{\ms{(\ell)}})=\D(\Theta^{\ms{(\ell)}}) \geq 0.\nonumber
\end{align}
Inequality (1) follows from the minimization of $\Q$ and inequality (2) follows from the surrogate property~\eqref{eq:surrogate}. Monotonicity and boundedness imply convergence of the objective values. In Appendix~\ref{sec:apx} we show that the limit points are stationary points of objective~\ref{eq:obj}. ~\looseness=-1 
\end{proof}
MM provides a powerful framework for establishing monotonicity and convergence; we are not the first to use it. The expectation-maximization~\citep{dempster1977maximum}, iteratively reweighted least-squares~\citep{holland1977robust}, and nonnegative matrix factorization algorithms~\citep{lee2000algorithms} are all special instances of MM.~\looseness=-1

\section{Related Work} \label{sec: related-work}
An extensive literature develops algorithms tailored to specific nonnegative 
tensor decompositions and loss functions. Mature implementations exist for CP 
and Tucker~\citep{kim_nonnegative_2007, kim2008nonnegative, phan2008multi, 
cichocki2009nonnegative, phan2011extended, chi2012tensors, zhou2015efficient}, 
with the MATLAB Tensor Toolbox~\citep{bader2006algorithm}, Python's Tensorly 
library~\citep{kossaifi2019tensorly}, and R's nnTensor package~\citep{tsuyuzaki2023nntensor} providing accessible open-source code. These libraries exclusively implement CP and Tucker. Alternative factorizations require either developing specialized 
algorithms for each model or relying on generic optimization methods that suffer 
from slow convergence and hyperparameter sensitivity~\citep{bengio2017deep, shalev2017failures}, creating a significant barrier to exploring domain-specific tensor models.~\looseness=-1

Researchers have begun to exploit the general form of~\eqref{eq:contraction_structure} to develop flexible tensor decomposition methods. \citet{yilmaz2010probabilistic} introduce~\eqref{eq:contraction_structure} as \textit{probabilistic latent tensor factorization} and derive multiplicative updates for the Euclidean distance. \citet{yilmaz2011generalised} extended this framework to \textit{generalized tensor factorization} under the $\beta$-divergence family, 
deriving heuristic multiplicative updates based on the connection between 
exponential families and Bregman divergences. However, their updates lack formal 
convergence guarantees—they do not prove monotonic descent or convergence to 
a stationary point. Their approach is limited to $\beta$-divergence and lacks a scalable or accessible implementation. NNEinFact
advances this work by: (i) providing a rigorous majorization-minimization-based 
proof of monotonicity and convergence, (ii) introducing the decomposability framework~\eqref{eq:prop} that vastly expands the family of applicable loss functions, and (iii) centering the einsum function as a general and easily accessible tool amenable to GPU acceleration. ~\looseness=-1

Recent work develops tensor methods for \textit{discrete density estimation} under 
particular loss functions. \citet{ghalamkari20242} provides an $\alpha$-divergence minimization framework for mixtures of CP, Tucker, and tensor-train. 
\citet{ghalamkari2025non} view normalized nonnegative tensors as discrete 
distributions and minimize KL divergence for decompositions under~\eqref{eq:contraction_structure}, deriving specific algorithms for CP, Tucker, and tensor-train. While more general, these approaches still develop specialized algorithms for particular model-loss combinations.~\looseness=-1 

Beyond squared Euclidean distance and KL divergence, various f-divergences and Bregman 
divergences~\citep{renyi1961measures, bregman1967relaxation} have been shown to handle sparse and noisy tensor data well. The $\alpha$-divergence (an f-divergence) 
and $\beta$-divergence (a Bregman divergence) are robust to missing values, 
outliers, and model misspecification. \citet{cichocki2007non} and \citet{fevotte2011algorithms} use the MM framework to develop multiplicative update algorithms for these divergences under NMF, CP and Tucker. Both divergences are special cases of the $(\alpha, \beta)$-divergence family, for which \citet{cichocki2011generalized} proposed multiplicative updates in the NMF setting. The decomposability property~\eqref{eq:prop} unifies and extends these prior results to arbitrary einsum factorizations and novel loss functions.~\looseness=-1

\section{Empirical Evaluation} \label{sec:exp}
Our experiments compare model structures and optimization algorithms across many loss functions. First, we compare NNEinFact to gradient-based automatic differentiation, demonstrating superior model fit at a fraction of the cost. Then, we demonstrate how problem-tailored tensor decompositions achieve superior data fit compared to standard methods in three real-world settings. Through a case study on Uber pickup data in New York City, we show how custom models recover interpretable spatiotemporal structure using a very small number of parameters.~\looseness=-1 

\subsection{Datasets}
We consider three complex multi-way datasets from network science, a domain where practitioners are developing tensor decomposition methods to capture latent structure underlying the data~\citep{contisciani2022inference, aguiar2024tensor, hood2025broad}. When handling real tensor data, modes typically correspond to a scientifically meaningful quantity (such as time). In such cases, we use the most natural letter to index that mode. For example, we index the `time' mode by $\obs{t}$ and the `action' mode by $\obs{a}$.~\looseness=-1

\textbf{ICEWS.} Dyadic relational data of the form "country $\obs{i}$ took action $\obs{a}$ toward country $\obs{j}$ at time $\obs{t}$" are commonly studied in international relations \citep{schrodt_event_1995}. These data can be interpreted as a directed, dynamic multilayer network comprising $\obs{V}$ nodes (representing actors), $\obs{A}$ layers (representing action types), and $\obs{T}$ time periods. Such structure naturally corresponds to a 4-mode count tensor ${\Ytensor \in \mathbb{N}_0^{\obs{V} \times \obs{V} \times \obs{A} \times \obs{T}}}$, where each element $y_{\obs{ijat}}$ denotes the number of times country $\obs{i}$ took action $\obs{a}$ toward country $\obs{j}$ during time period $\obs{t}$. We analyze data of this form from the Integrated Crisis Early Warning System (ICEWS)~\citep{boschee_icews_2023} spanning 1995--2013, where events are aggregated into monthly counts. This data yields the observed tensor $\Ytensor^{(\text{icews})} \in \mathbb{N}_0^{249 \times 249 \times 20 \times 228}$.~\looseness=-1

\textbf{Uber.} Using Uber pickup data~\citep{frosttdataset}, we construct a 5-mode spatiotemporal tensor $\Ytensor^{(\text{uber})} \in \mathbb{N}_0^{27 \times 7 \times 24 \times 100 \times 100}$ where $y_{\obs{wdhij}}$ denotes the number of ride pickups in hour $\obs{h}$ of day $\obs{d}$ of week $\obs{w}$ at spatial location $(\obs{i},\obs{j})$. The dataset consists of ride pickups in New York City from April to September 2014.  This dataset exhibits strong spatial dependencies and rich temporal structure, including seasonal and cyclical patterns.~\looseness=-1

\textbf{WITS.} Finally, we use merchandise-trade data accessed from the World Integrated Trade Solution (WITS) \citep{wits2025}. WITS is a delivery platform maintained by the World Bank that provides standardized access to several primary sources. In this paper, we use export data as reported by national customs authorities via WITS as done by \citet{jian2025}. We retain annual trade values (thousand USD) for 96 HS2 categories across 196 countries over 1996--2024. We consider international trade data in the form of a 4-mode tensor $\Ytensor^{(\text{trade})} \in \mathbb{N}_0^{196 \times 196 \times 96 \times 29}$, where $y_{\obs{e}\obs{i}\obs{g}\obs{t}}$ represents the value (in U.S. dollars) of good $\obs{g}$ exported from country $\obs{e}$ to country $\obs{i}$ in year $\obs{t}$. This data suffers from missingness and noise, primarily due to asymmetric reporting from importing and exporting countries~\citep{chen2022advancing}. ~\looseness=-1

\subsection{Divergences}
We implement NNEinFact for the $(\alpha,\beta)$-divergence defined in Appendix~\ref{sec:apx-alg}, denoting the loss by $\D_{\alpha, \beta}$. There is a tight connection between $(\alpha, \beta)$-divergence minimization and maximum likelihood estimation in exponential family models~\citep{yilmaz2012alpha}. In our experiments, we leverage this relationship to choose $\alpha$ and $\beta$, selecting $\beta=0$ for count data (corresponding to the Poisson likelihood), $\beta=-0.5$ for sparse positive continuous data (corresponding to the compound Poisson-gamma likelihood). We further adjust ${\alpha \in \{0.7,0.8, 1.0, 1.2, 1.3\}}$. When $\alpha, \beta, \alpha + \beta \neq 0$, as is the case in all of our experiments, the loss-specific functions $a(x,y)$ and $b(x,y)$ take the form 
\begin{align}
a(x,y) = x^\alpha y^{\beta - 1}, \quad b(x,y) = y^{\alpha + \beta - 1}.
\end{align}
From this perspective, $\alpha$ is a robustness parameter: choices of $\alpha < 1 $ decrease the signal of large values (such as outliers), while choices of $\alpha > 1$ decrease the signal of small values (such as missing values encoded as zeros).~\looseness=-1 

\subsection{Models}\label{sec:models}
For all datasets, we fit a variety of common factorizations including CP and tensor-train as defined in equations \eqref{eq:cp} and \eqref{eq:tt}. We also fit four versions of Tucker. We fit both \textit{hypercubic} Tucker, where all latent dimensions $\lat{R_m}$ are equal, and the Tucker decomposition where $\lat{R_m}$ is proportional to the observed dimension $\obs{I_m}$, as well as each of their low-rank (LR) variants, defined in equation~\eqref{eq:core_decomp}. ~\looseness=-1

\textbf{Custom models.}
We also fit custom models tailored to each dataset. When designing these models, we use basic domain-specific knowledge to determine which modes have similar/different complexity and generally recommend applying domain-specific knowledge to build more complex models. For the Uber data, we fit the model \begin{align}\label{eq:uber}
    \hat y_{\obs{wdhij}} = \sum_{\lat{r}=1}^{\lat{R}} \theta_{\obs{w}\lat{r}}^{\ms{(1)}}  \theta_{\obs{d}\lat{r}}^{\ms{(2)}} \theta_{\obs{h}\lat{r}}^{\ms{(3)}} \sum_{\lat{k}=1}^{\lat{K}} \theta_{\obs{i}\lat{rk}}^{\ms{(4)}}\theta_{\obs{j}\lat{rk}}^{\ms{(5)}}
\end{align}
corresponding to 
$
\texttt{\obs{w}\lat{r},\obs{d}\lat{r},\obs{h}\lat{r},\obs{i}\lat{rk},\obs{j}\lat{rk}}\rightarrow \texttt{\obs{wdhij}}.$ Each latent class $\lat{r}$ corresponds to a different temporal pattern and each of these temporal pattern has an additional $\lat{K}$ factors corresponding to the latitude and longitude modes $\obs{i}$ and $\obs{j}$. Setting $\lat{K}\!=\!1$ recovers the rank-$\lat{R}$ CP decomposition, while $\lat{K}\!>\!1$ allocates relatively more parameters to the spatial modes than the others.~\looseness=-1 

The ICEWS tensor has three modes with dimension greater than $200$ (corresponding to sender, receiver, month), yet the `action' mode has dimension $20$. Such oblong structure motivates our custom model. The rank $\lat{R}$ CP decomposition assigns all modes the same latent dimension to create factor matrices with size ${\obs{I_m} \times \lat{R}}$. Instead, we consider the model:
\begin{align}
    \hat y_{\obs{ijat}} = \sum_{\lat{r}=1}^{\lat{R}} \theta^{\ms{(1)}}_{\obs{i}\lat{r}}\theta^{\ms{(2)}}_{\obs{j}\lat{r}}(\underbrace{\sum_{\lat{k}=1}^{\lat{K}} \phi_{\obs{a}\lat{k}} w_{\lat{kr}}}_{\theta^{\ms{(3)}}_{\obs{a}\lat{r}}})\theta^{\ms{(4)}}_{\obs{t}\lat{r}},
\end{align}
a rank-$\lat{R}$ CP decomposition with factor matrix $\Theta^{\ms{(3)}}$ corresponding to the `action' mode of rank $\lat{K}\!\ll\!\lat{R}$. This model corresponds to the string $\texttt{\obs{i}\lat{r},\obs{j}\lat{r},\obs{a}\lat{k},\lat{kr},\obs{t}\lat{r}$\rightarrow$\obs{ijat}}$.~\looseness=-1

Finally, the WITS tensor includes $196$ importing and exporting countries and $96$ goods but only $29$ time steps. We impose low-rank structure on the `time' mode to estimate
\begin{align}
    \hat y_{\obs{eigt}} = \sum_{\lat{r}=1}^{\lat{R}} \theta_{\obs{e}\lat{r}}^{\ms{(1)}} \theta_{\obs{i}\lat{r}}^{\ms{(2)}} \theta_{\obs{g}\lat{r}}^{\ms{(3)}} \sum_{\lat{k}=1}^{\lat{K}} \phi_{\obs{t}\lat{k}} w_{\lat{kr}}.
\end{align}
for ${\lat{K}\!\ll\!\lat{R}}$. This model imposes less complex temporal structure than that of the network structure, which governs good-specific interactions between importers and exporters.~\looseness=-1  

\subsection{Baselines}
We compare NNEinFact to baseline algorithms and models.~\looseness=-1 

\textbf{Algorithms.}
We implement gradient-based automatic differentiation as a baseline to the multiplicative update algorithm in~\Cref{sec:algo} and minimize the loss using Adam~\citep{kingma2015adam}. We update the log-transformed parameters to uphold the nonnegativity constraint. Adam's fit is often sensitive to its initial learning rate parameter, so we use baselines with initial learning rates $(0.01, 0.05, 0.1, 0.3, 0.5, 1)$. All algorithms were implemented in Pytorch and run on a GPU.~\looseness=-1 

\textbf{Models.}
We interpret the common models in~\Cref{sec:models} as baselines to the custom models and fit them using the multiplicative updates of~\Cref{alg:mu}. All models use a roughly equal number of parameters, which are initialized at random from the standard uniform distribution.~\looseness=-1 

\subsection{Experimental design}
We split each dataset for training and evaluation. For each observed tensor, we create ten\footnote{For WITS, we create $50$ train-test splits due to high variation in heldout loss among splits.} 
train-test splits. For each split, we randomly assign each element $\obs{\mathbf{i}}$ of $\Ytensor$ to the training set with $90\%$ probability and otherwise assign it to a heldout set $\mathcal{H}$. We further allocate $5\%$ of the training set to a validation set $\mathcal{V}$ and use it to check for early stopping. Each method minimizes the training loss $\sum_{\obs{\mathbf{i}} \not \in \mathcal{H},\mathcal{V}} \D(\yi, \mui)$. ~\looseness=-1 

Training and evaluation with masked values is straightforward, since all methods can handle missing values. For NNEinFact, given a binary mask $\mathcal{M}$ the size of $\Ytensor$, one proceeds as usual by replacing $\Ytensor$ and $\Yhattensor$ with $\mathcal{M} \odot \Ytensor$ and $\mathcal{M} \odot \Yhattensor$ in \Cref{alg:mu}.~\looseness=-1

\textbf{Evaluation metric.} We evaluate each method using average heldout loss, defined as 
\begin{align}
\bar{\mathcal{L}}_{\alpha,\beta} =  \frac{1}{|\mathcal{H}|}\sum_{\mathbf{\obs{i}} \in \mathcal{H}} \D_{\alpha, \beta}(\yi, \mui).
\end{align}

\textbf{Time to convergence.} We also compare each optimization method's runtime to convergence for many loss functions. In particular, we fit each dataset's custom model using the $(\alpha, \beta)$-divergence with $\alpha \in \{0.7,1.0,1.3\}$ and ${\beta \in \{0,1\}}$. Each method runs until the convergence criteria specified in Appendix~\ref{sec:apx-alg} is met. We measure time to convergence using wall-clock time.~\looseness=-1

\FloatBarrier
\begin{figure}[!h]
    \centering
\includegraphics[width=\linewidth]{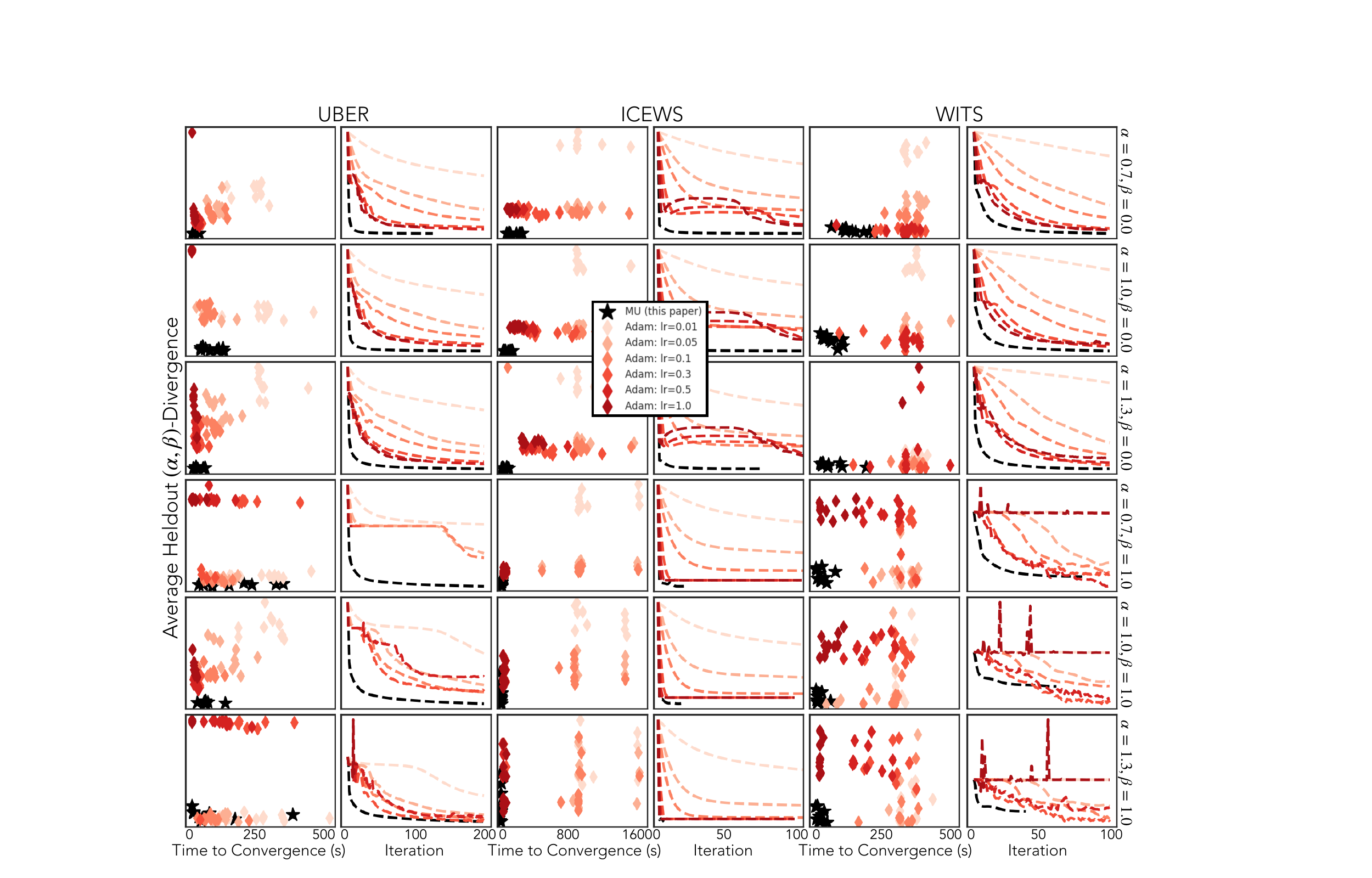}
    \caption{\textbf{NNEinFact is more efficient than automatic differentiation and achieves better fit.} Black: NNEinFact's multiplicative update algorithm. Red: gradient-based automatic differentiation with Adam. Each row corresponds to a different $(\alpha, \beta)$ parameterization. Each pair of columns corresponds to a different dataset. Left columns: each point corresponds to a random train-test split and method. Heldout $(\alpha, \beta)$-divergence is shown against wall-clock time to convergence. Right columns: heldout $(\alpha, \beta)$-divergence (on log scale) against training iteration. Each line represents one run of each algorithm.~\looseness=-1}
    \label{fig:autocomp}
\end{figure}
\FloatBarrier

\subsection{Results}
\Cref{fig:autocomp} shows how NNEinFact's multiplicative updates outperform gradient-based automatic differentiation in both runtime and heldout loss across the six $(\alpha,\beta)$ parameterizations when fit to the Uber, ICEWS, and WITS data (left, middle, right). Each row corresponds to a different $(\alpha, \beta)$ parameterization. For each dataset, the left column shows the heldout loss against runtime to convergence (where lower is better); each plot's optimal region is the bottom left corner. Each point represents a different train-test split. We observe that NNEinFact's points typically congregate in the bottom left, converging quickly to small loss values in most cases. The right column plots the heldout loss (in log scale) by iteration corresponding to the first train-test split. The black dotted line corresponds to the multiplicative updates, which decreases the loss much more rapidly than its competitors. These patterns hold across all datasets. We report the mean runtimes, heldout loss values, and standard errors for each method in Appendix~\ref{sec:apx_results}.

\begin{figure}[!h]
    \centering
    \includegraphics[width=\linewidth]{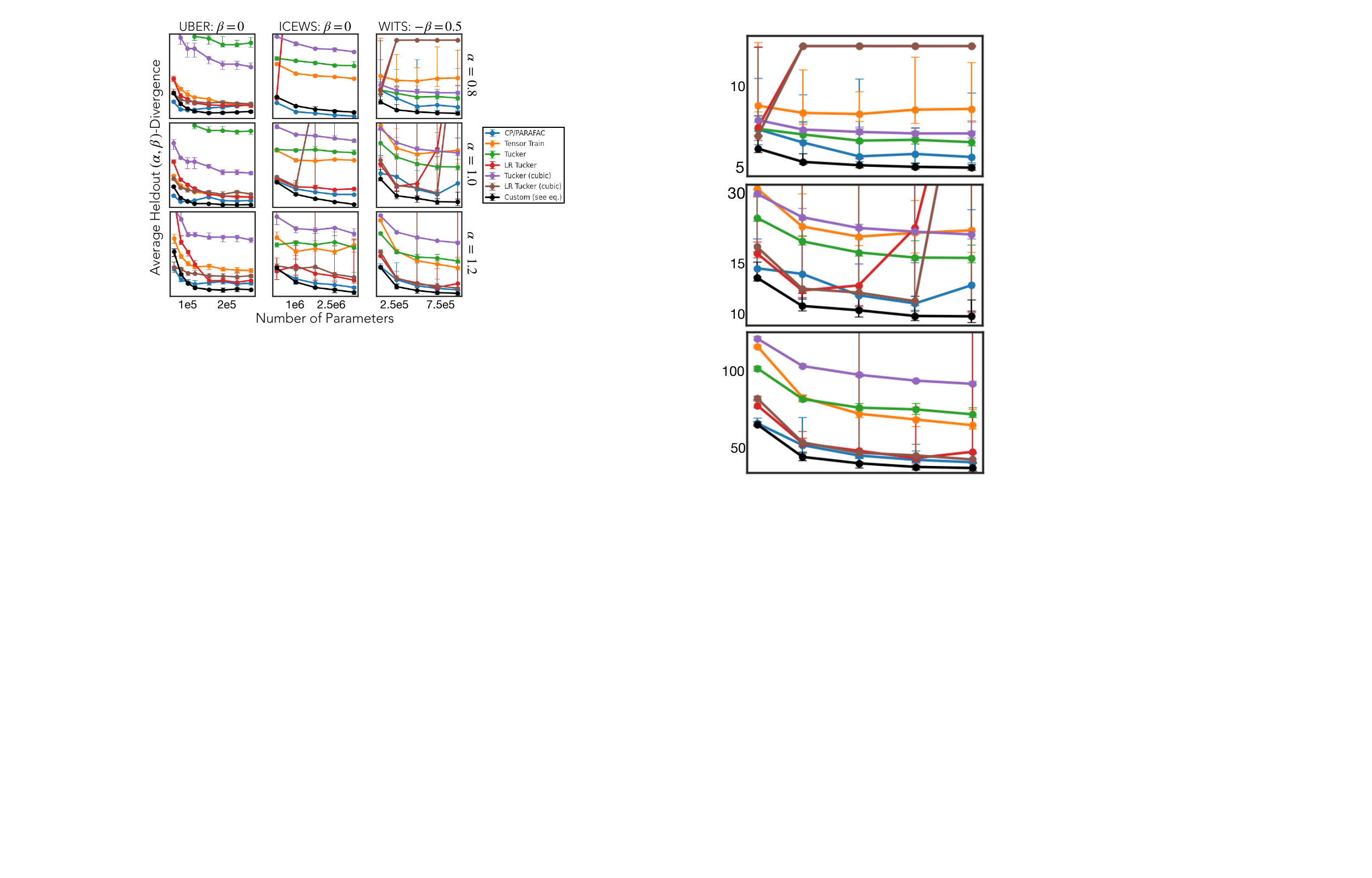}
    \caption{\textbf{Custom models attain lower heldout loss than common ones.} Model comparison: fit, as measured by average heldout loss, to three large tensor datasets (Uber, ICEWS, WITS). Error bars represent the interquartile range across random train-test splits. For each dataset, a different custom model achieves the lowest heldout loss. Each loss is tailored to the data. We set ${\beta\!=\!0}$ for count data and ${\beta\!=\!- 0.5}$ for sparse, positive continuous data. $\alpha$ controls a model's robustness to missing values, outliers, and model misspecification.}
    \label{fig:model}
\end{figure}

\textbf{Model comparison.} We report each model's lowest mean heldout losses and their corresponding standard errors in Appendix~\ref{sec:apx_results}. \Cref{fig:model} shows how the custom models attain lower heldout loss values than their baselines. The only exception occurs for {ICEWS, $\alpha\!=\!0.8$}, where CP offers a $1\%$ reduction in heldout loss to the custom model.  At times, LR Tucker (red, brown) quickly converges to a poor stationary point ($\alpha = 0.8, 1.0$ and ICEWS, WITS). When avoiding this situation, LR Tucker often attains much lower loss values than its full version (green, purple). Overall, these results highlight how NNEinFact's ability to fit custom models offers empirical improvement over existing models.~\looseness=-1

\textbf{NNEinFact recovers interpretable qualitative structure.} Finally, we highlight interpretable qualitative structure uncovered by the custom model applied to the Uber data. We further partition the $100 \times 100$ spatial grid into a $400\times 400$ mesh, set the number of temporal classes to $\lat{R}\!=\!10$, the number of temporal-specific spatial factors to $\lat{K}\!=\!6$ and fit the custom model. With only $48,580$ parameters (relative to $725$ million entries), the estimated custom model recovers classes corresponding to interpretable spatiotemporal structure. Three of these classes are shown in~\Cref{fig:qual}.~\looseness=-1
\FloatBarrier
\begin{figure}[!ht]
    \centering
    \includegraphics[width=\linewidth]{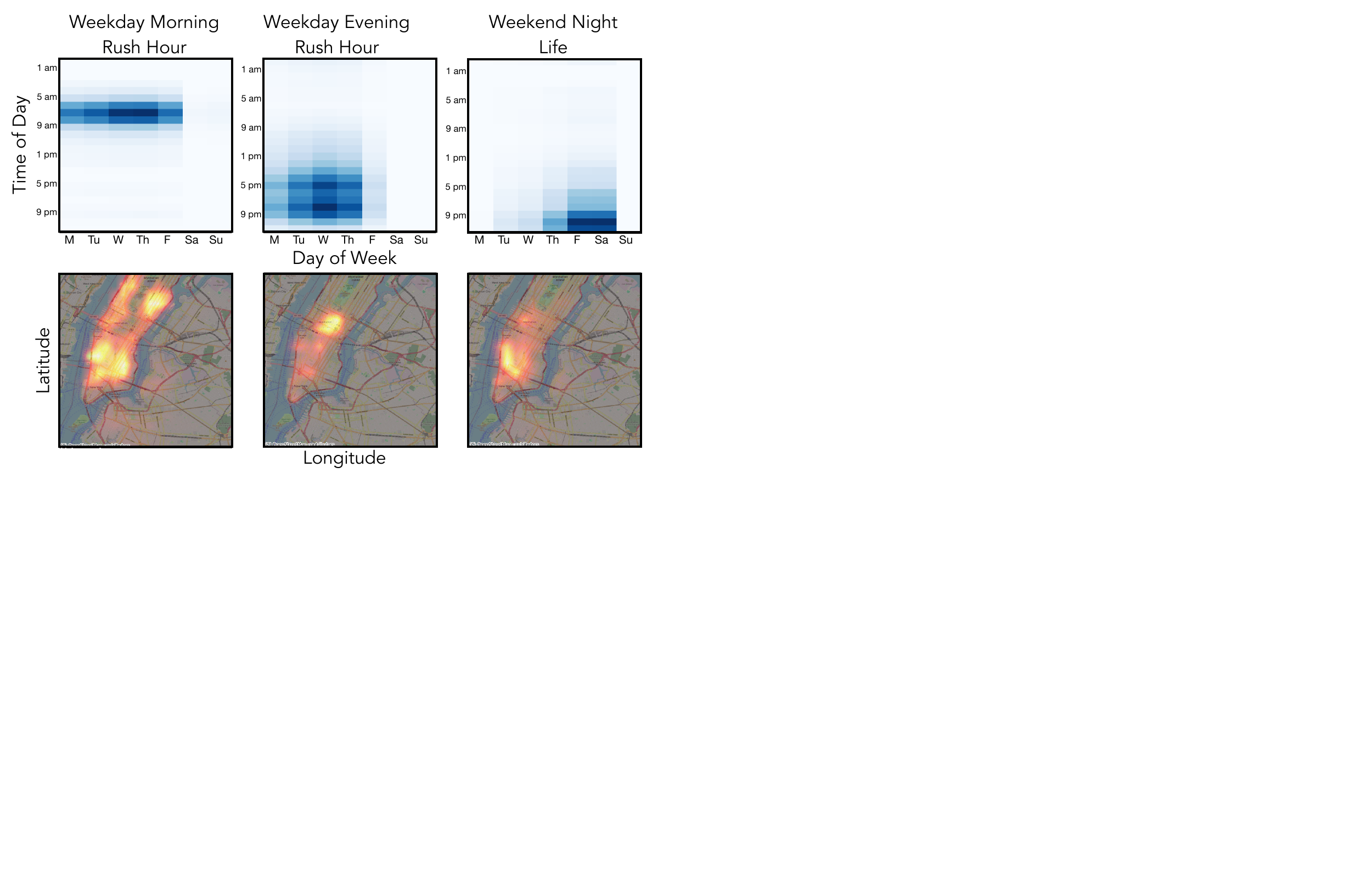}
    \caption{\textbf{NNEinFact uncovers interpretable spatiotemporal structure.} Each row depicts a latent class $\lat{r}$ from the model $\texttt{\obs{w}\lat{r},\obs{d}\lat{r},\obs{h}\lat{r},\obs{i}\lat{rk},\obs{j}\lat{rk}}\rightarrow \texttt{\obs{wdhij}}$. The top row shows temporal patterns by time of day and day of week, while the bottom row shows spatial loadings. The first two columns capture weekday morning and evening commutes, and the third captures weekend nightlife. Combined with the spatial loadings, these classes reveal interpretable spatiotemporal structure: morning commutes originate in residential areas, evening commutes in Midtown Manhattan, and nightlife around West Village.}
    \label{fig:qual}
\end{figure}
\FloatBarrier

 The first class captures weekday morning commutes across Manhattan. The second shows weekday afternoon commutes originating in Midtown Manhattan, the city’s primary central business district. The third reflects weekend nightlife concentrated around West Village. Capturing similarly rich spatial structure with standard models (e.g., CP) requires either a large rank (${\lat{R}\!\gg\!10}$), sacrificing parsimonious temporal structure, or coupling latitude and longitude into a $400^2$-dimensional spatial mode, prohibitively increasing the parameter count.~\looseness=-1 

\section{Conclusion} \label{sec:conclusion}
NNEinFact is most valuable when exploring 
custom factorizations beyond CP and Tucker, using loss functions other than 
least-squares, or rapidly prototyping tensor models. Like most nonnegative algorithms, NNEinFact would likely benefit from multiple random initializations. For standard CP or Tucker with least-squares where specialized algorithms like hierarchical alternating least squares exist, such implementations may offer empirical advantages.~\looseness=-1

NNEinFact provides a foundation for modern tensor methods as multiway data becomes increasingly prevalent. The connection between ($\alpha$, $\beta$)-divergences and probabilistic models motivates principled approaches to scientific modeling. The computational efficiency of einsum-based updates, combined with rapidly improving hardware, may enable greater scaling. Beyond nonnegative tensor decomposition, we expect NNEinFact to offer value in many areas of machine learning where tensor decomposition methods are becoming increasingly relevant, such as probabilistic circuits~\citep{loconte2025what} and tensor-based regression~\citep{llosa2022reduced}.~\looseness=-1

\textbf{Software.}
NNEinFact's source code and tutorials are available at \href{https://github.com/jhood3/einfact}{github.com/jhood3/einfact}.

\textbf{Acknowledgements.} We would like to thank Ali Taylan Cemgil, Jie Jian, and Jimmy Lederman for helpful discussions and feedback. We would like to thank Nicolas Gillis especially, for pointing out an issue with a statement on convergence in a previous version of this work. JH is supported by the National Science Foundation under Grant No. 2140001. Part of the computing for this project was conducted on the University of Chicago’s Data Science Institute cluster.~\looseness=-1

\bibliographystyle{plainnat}
\bibliography{references, ref}

\appendix

\section{Proofs of Theoretical Results}  \label{sec:apx}

\textbf{Proof of Lemma \ref{lem:surrogate}.}

\textbf{Construction of the surrogate function $\Q$.}
First we show $\Q(\widetilde \Theta \mid \widetilde \Theta) = \D(\widetilde \Theta)$. By definition,
    \begin{align}
        \Q(\widetilde \Theta \mid \widetilde \Theta) = &\sum_{\bi}  \sum_{\br_{\lat{\ell}}} \frac{\tilde y_{\bi, \br_{\lat{\ell}}}}{\tilde y_{\bi}} \D^{\text{vex}}(y_{\bi} , \tilde y_{\bi}\tfrac{\tpil}{\tpil})
    +\D^{\text{cave}}(y_{\bi} , \tilde y_{\bi}) + \partial_y \left[\D^{\text{cave}}(y_{\bi} , \tilde y_{\bi})\right] \underbrace{(\tilde y_{\bi} - \tilde y_{\bi})}_{0}\\
    &= \sum_{\bi} \D^{\text{vex}}(y_{\bi} , \tilde y_{\bi})\underbrace{\sum_{\br_{\lat{\ell}}} \frac{\tilde y_{\bi, \br_{\lat{\ell}}}}{\tilde y_{\bi}}}_{1} 
    + \D^{\text{cave}}(y_{\bi} , \tilde y_{\bi})\\
    &= \sum_{\bi} \D(y_{\bi} , \tilde y_{\bi}) = \D(\Ytensor , \widetilde{\Ytensor}) = \D(\widetilde \Theta).
    \end{align}
    We apply Jensen's inequality to bound the convex component and a first-order Taylor approximation to bound the concave component. 
    
    \textbf{Bounding the convex component.} Letting $\hat y_{\bi, \br} = \prod_{_\ell=1}^L \theta_{\bi_{\obs{\ell}}, \br_{\lat{\ell}}}$, we wish to show that 
    \begin{align}
    \D^{\text{vex}}(y_{\bi} , \hat y_{\bi}) \leq \sum_{\br_{\lat{\ell}}} \frac{\tilde y_{\bi, \br_{\lat{\ell}}}}{\tilde y_{\bi}} \D^{\text{vex}}(y_{\bi} , \tilde y_{\bi}\tfrac{\theta_{\bi_{\obs{\ell}},\br_{\lat{\ell}}}^{\ms{(\ell)}}}{\tpil}).
\end{align}
First, note that $\frac{\hat y_{\bi, \br_{\lat{\ell}}}}{\tilde y_{\bi, \br_{\lat{\ell}}}} = \frac{\theta_{\bi_{\obs{\ell}}, \br_{\lat{\ell}}}}{\tilde \theta_{\bi_{\obs{\ell}}, \br_{\lat{\ell}}}},$ and so we can write
\begin{align}
    \sum_{\br_{\lat{\ell}}} \frac{\tilde y_{\bi, \br_{\lat{\ell}}}}{\tilde y_{\bi}} \D^{\text{vex}}(y_{\bi} , \tilde y_{\bi}\tfrac{\theta_{\bi_{\obs{\ell}},\br_{\lat{\ell}}}^{\ms{(\ell)}}}{\tpil}) &= \sum_{\br_{\lat{\ell}}} \frac{\tilde y_{\bi, \br_{\lat{\ell}}}}{\tilde y_{\bi}} \D^{\text{vex}}(y_{\bi} , \tilde y_{\bi}\frac{\hat y_{\bi, \br_{\lat{\ell}}}}{\tilde y_{\bi, \br_{\lat{\ell}}}})\\
    &= \sum_{\br_{\lat{\ell}}} \frac{\tilde y_{\bi, \br_{\lat{\ell}}}}{\tilde y_{\bi}} \D^{\text{vex}}(y_{\bi} , \hat y_{\bi, \br_{\lat{\ell}}}\frac{\tilde y_{\bi}}{\tilde y_{\bi, \br_{\lat{\ell}}}}).
    \end{align}
    Since $\sum_{\br_{\lat{\ell}}} \frac{\tilde y_{\bi, \br_{\lat{\ell}}}}{\tilde y_{\bi}} = \frac{\tilde y_{\bi}}{\tilde y_{\bi}}  = 1$, Jensen's inequality implies that 
    \begin{align}
    \sum_{\br_{\lat{\ell}}} \frac{\tilde y_{\bi, \br_{\lat{\ell}}}}{\tilde y_{\bi}} \D^{\text{vex}}(y_{\bi} , \hat y_{\bi, \br_{\lat{\ell}}}\frac{\tilde y_{\bi}}{\tilde y_{\bi, \br_{\lat{\ell}}}}) &\geq  \D^{\text{vex}}(y_{\bi} , \sum_{\br_{\lat{\ell}}} \frac{\tilde y_{\bi, \br_{\lat{\ell}}}}{\tilde y_{\bi}} \left(\hat y_{\bi, \br_{\lat{\ell}}}\frac{\tilde y_{\bi}}{\tilde y_{\bi, \br_{\lat{\ell}}}}\right))\\
    &= \D^{\text{vex}}(y_{\bi} , \sum_{\br_{\lat{\ell}}} \hat y_{\bi, \br_{\lat{\ell}}})  \label{eq:jensen_vex}
    = \D^{\text{vex}}(y_{\bi} ,  \mui)
\end{align}
where~\eqref{eq:jensen_vex} follows from Jensen's inequality.

\textbf{Bounding the concave component.} For any concave function $f$, the first-order Taylor expansion of $f$ at $\widetilde \Theta$ yields the inequality $f(\Theta) \leq f(\widetilde \Theta) + \nabla f(\tilde \Theta)^\top (\Theta - \widetilde \Theta)$. We leverage this property of $\D^{\text{cave}}(y_{\bi} , \mui)$ to write 
\begin{align}
    \D^{\text{cave}}(y_{\bi} , \mui) \leq \D^{\text{cave}}(y_{\bi} , \tilde y_{\bi}) + \partial_y \D^{\text{cave}}(y_{\bi} , \tilde y_{\bi})(\hat y_{\bi} - \tilde y_{\bi}).
\end{align}

Putting these parts together,
\begin{align}
    \Q(\Theta \mid \widetilde \Theta) &=   \sum_{\bi}  \sum_{\br_{\lat{\ell}}} \frac{\tilde y_{\bi, \br_{\lat{\ell}}}}{\tilde y_{\bi}} \D^{\text{vex}}(y_{\bi} , \tilde y_{\bi}\tfrac{\theta_{\bi_{\obs{\ell}},\br_{\lat{\ell}}}^{\ms{(\ell)}}}{\tpil})
    +\D^{\text{cave}}(y_{\bi} , \tilde y_{\bi}) + \partial_y \D^{\text{cave}}(y_{\bi} , \tilde y_{\bi}) (\hat y_{\bi} - \tilde y_{\bi}) \\
    & \geq
    \sum_{\bi}\D^{\text{vex}}(y_{\bi} , \mui) + 
    \D^{\text{cave}}(y_{\bi} , \mui) \\
    &= \sum_{\bi} \D(y_{\bi} , \mui) = \D(\Theta). 
\end{align}

\textbf{Proof of Lemma \ref{lem:convex}.}

\textbf{Convexity of $\Q$.}
The Taylor expansion term 
\begin{align}
\D^{\text{cave}}(y_{\bi} , \tilde y_{\bi}) + \partial_y \D^{\text{cave}}(y_{\bi} , \tilde y_{\bi}) (\hat y_{\bi} - \tilde y_{\bi})
\end{align}
is linear in $\Theta$ and is thus convex. Consider the term 
\begin{align}
    \sum_{\br_{\lat{\ell}}} \frac{\tilde y_{\bi, \br_{\lat{\ell}}}}{\tilde y_{\bi}} \D^{\text{vex}}(y_{\bi} , \tilde y_{\bi}\tfrac{\theta_{\bi_{\obs{\ell}},\br_{\lat{\ell}}}^{\ms{(\ell)}}}{\tpil}).
    \end{align}
It is a weighted sum of terms $\D^{\text{vex}}(y_{\bi} , \tilde y_{\bi}\tfrac{\theta_{\bi_{\obs{\ell}},\br_{\lat{\ell}}}^{\ms{(\ell)}}}{\tpil})$, each of which is convex in its second argument. Since affine transformations and nonnegative weighted sums of convex functions preserve convexity, this term is convex. 

Again, since sums of convex terms are convex, it holds that $\Q$ is convex in $\Theta$.~\looseness=-1

\textbf{Minimizing $\Q$.} $\Q(\Theta \mid \widetilde \Theta)$ is proportional in $\Theta$ to
\begin{align}
    \Q(\Theta \mid \widetilde \Theta) \propto_{\Theta} \sum_{\bi} \sum_{\br_{\lat{\ell}}} \frac{\tilde y_{\bi, \br_{\lat{\ell}}}}{\tilde y_{\bi}} \D^{\text{vex}}(y_{\bi} , \tilde y_{\bi}\tfrac{\theta_{\bi_{\obs{\ell}},\br_{\lat{\ell}}}^{\ms{(\ell)}}}{\tpil})
     + \partial_y \D^{\text{cave}}(y_{\bi} , \tilde y_{\bi}) \hat y_{\bi}
\end{align}
with gradient given element-wise as 
\begin{align}
    \frac{\partial}{\partial \theta_{\bi_{\obs{\ell}}, \br_{\lat{\ell}}}}\left[\Q(\Theta \mid \widetilde \Theta)\right] &= \sum_{\bi} \frac{\tilde y_{\bi, \br_{\lat{\ell}}}}{\tilde y_{\bi}} \frac{\tilde y_{\bi}}{\tilde \theta_{\bi_{\obs{\ell}}, \br_{\lat{\ell}}}}\partial_y \D^{\text{vex}}(y_{\bi} , \tilde y_{\bi}\tfrac{\theta_{\bi_{\obs{\ell}},\br_{\lat{\ell}}}^{\ms{(\ell)}}}{\tpil})
     + \partial_y \D^{\text{cave}}(y_{\bi} , \tilde y_{\bi}) \frac{\partial \mui}{\partial \theta_{\bi_{\obs{\ell}}, \br_{\lat{\ell}}}} \\
     &= \sum_{\bi} \frac{\partial \mui}{\partial \theta_{\bi_{\obs{\ell}}, \br_{\lat{\ell}}}}\left(\partial_y \D^{\text{vex}}(y_{\bi} , \tilde y_{\bi}\tfrac{\theta_{\bi_{\obs{\ell}},\br_{\lat{\ell}}}^{\ms{(\ell)}}}{\tpil})
     + \partial_y \D^{\text{cave}}(y_{\bi} , \tilde y_{\bi}) \right).\label{eq:q-grad}
\end{align}
Letting $\lambda = \tfrac{\theta_{\bi_{\obs{\ell}},\br_{\lat{\ell}}}^{\ms{(\ell)}}}{\tpil}$ and setting $\frac{\partial}{\partial \theta_{\bi_{\obs{\ell}}, \br_{\lat{\ell}}}}\Q(\Theta \mid \widetilde \Theta) = 0$, we have that
\begin{align}
    \sum_{\bi} \frac{\partial \mui}{\partial \theta_{\bi_{\obs{\ell}}, \br_{\lat{\ell}}}}\left(\partial_y \D^{\text{vex}}(y_{\bi} , \tilde y_{\bi}\lambda)
     + \partial_y \D^{\text{cave}}(y_{\bi} , \tilde y_{\bi}) \right) &= 0.
\end{align}
Under the relation $\partial_y \D^{\text{vex}}(y_{\bi} , \tilde y_{\bi}\lambda)
     + \partial_y \D^{\text{cave}}(y_{\bi} , \tilde y_{\bi}) = \left[c(\lambda) (g(\lambda) b(\yi,\mui) - a(\yi,\mui))\right]$
\begin{align}
    c(\lambda) g(\lambda) \sum_{\bi} \frac{\partial \mui}{\partial \theta_{\bi_{\obs{\ell}}, \br_{\lat{\ell}}}}  b(\yi,\mui) = c(\lambda) \sum_{\bi} \frac{\partial \mui}{\partial \theta_{\bi_{\obs{\ell}}, \br_{\lat{\ell}}}}a(\yi,\mui)
\end{align}
which implies that 
\begin{align}
    \lambda = \tfrac{\theta_{\bi_{\obs{\ell}},\br_{\lat{\ell}}}^{\ms{(\ell)}}}{\tpil} = g^{-1} \left(\frac{\sum_{\bi} \frac{\partial \mui}{\partial \theta_{\bi_{\obs{\ell}}, \br_{\lat{\ell}}}}a(\yi,\mui)}{\sum_{\bi} \frac{\partial \mui}{\partial \theta_{\bi_{\obs{\ell}}, \br_{\lat{\ell}}}}  b(\yi,\mui)}\right).
\end{align}
Multiplying both sides by $\tpil$ yields the multiplicative update. If the multiplicative update lies outside of $[\epsilon, \infty)$, the minimum is attained at $\theta_{\bi_{\obs{\ell}},\br_{\lat{\ell}}}^{\ms{(\ell)}} = \epsilon$ due to the convexity of $Q$. 

\textbf{Proof of convergence to a stationary point.} We draw from the work of~\cite{razaviyayn2013unified}, Theorem 2, which establishes convergence to stationary points for a class of algorithms referred to as \textit{Block Successive Upper-bound Minimization Algorithms}. \Cref{alg:mu} is one of these.  In particular, 
\begin{itemize}
\item $Q$ is quasi-convex in $\Tms{\ell}$.
\item $Q(\Tms{\ell} \mid \tilde \Theta^{\ms{(\ell)}})$ has a unique minimum in $\Tms{\ell}$. 
\item $Q(\Tms{\ell} \mid \Tms{\ell}) = \D(\Tms{\ell})$ for all $\Tms{\ell} \geq \epsilon$. 
\item $Q(\Tms{\ell} \mid \tilde\Theta^{\ms{(\ell)}}) \geq \D(\Tms{\ell})$ for all $\Tms{\ell}, \tilde \Theta^{\ms{(\ell)}} \geq \epsilon$.
\item $\nabla_{\Tms{\ell}} Q(\Tms{\ell} \mid \tilde \Theta^{\ms{(\ell)}}) = \nabla_{\Tms{\ell}} \D(\Tms{\ell})$ at $\tilde \Theta^{\ms{(\ell)}} = \Tms{\ell}$ for all $\Tms{\ell} \geq \epsilon$. 
\item $Q(\Tms{\ell'} \mid \tilde \Theta^{\ms{(\ell')}})$ is continuous in $\all$ for all $\Theta^{\ms{(\ell)}} \geq \epsilon, \ell \in [L]$.
\end{itemize}
The convexity of $Q(\Tms{\ell} \mid \tilde \Theta^{\ms{(\ell)}})$ in $\Tms{\ell}$ implies that $Q$ is quasi-convex~\citep{gillis2020nonnegative}. The closed-form expression for the minimum of $Q(\Tms{\ell} \mid \tilde \Theta^{\ms{(\ell)}})$ implies uniqueness, while the third and fourth statements are the surrogate property. A quick evaluation of \eqref{eq:q-grad} at $\tilde \Theta^{\ms{(\ell)}} = \Tms{\ell}$ implies tangency and the continuity of $Q$ follows from construction and the continuity of $\D$.~\looseness=-1 

Since these statements hold, Theorem 2 of~\cite{razaviyayn2013unified} applies: every limit point is a stationary point of objective~\ref{eq:obj}.

\newpage 

\section{Algorithmic Details} \label{sec:apx-alg}
All experiments were run on one GPU and all algorithms were implemented in Pytorch. 

\textbf{Stopping criterion.} We use a variety of stopping criterion to evaluate convergence, including increasing validation loss for $5$ consecutive iterations, a decrease in the training loss of less than $10^{-6}$, or 5,000 iterations of training. 

\subsection{Divergences}
We implemented the $(\alpha, \beta)$-divergence~\citep{cichocki2010families} setting of~\Cref{alg:mu}, a family of divergences parameterized by $\alpha, \beta \in \mathbb{R}$. The $(\alpha, \beta)$-divergence is defined as 
\begin{align}
\D_{\alpha,\beta}(x,y)=
\begin{cases}
\displaystyle
\tfrac{1}{\alpha\beta}\!\left[
\tfrac{\alpha}{\alpha+\beta}x^{\alpha+\beta}
+ \tfrac{\beta}{\alpha+\beta}y^{\alpha+\beta}
- x^\alpha y^\beta
\right]
& \alpha,\beta,\alpha+\beta\neq 0 \\[1.2em]
\displaystyle
\tfrac{1}{\alpha^2}\left[y^{\alpha} - x^\alpha + 
\alpha{x^\alpha}\log\!\tfrac{x}{y}\right]
& \beta=0,\ \alpha\neq 0 \\[1em]
\displaystyle
\tfrac{1}{\alpha^2}\left[\tfrac{x^\alpha}{y^{\alpha}}- 1 + \alpha \log \tfrac yx\right]
& \alpha=-\beta\neq 0 \\[1em]
\displaystyle
\tfrac{1}{\beta^2}\left[\beta y^\beta \log \tfrac yx - y^\beta + x^\beta\right]
& \alpha=0,\ \beta\neq 0 \\[1em]
\displaystyle
\tfrac12\left(\log x - \log y\right)^2
& \alpha=\beta=0
\end{cases}
\end{align}
Special cases include the $\alpha$-divergence~\citep{amari2007integration}, for $\alpha + \beta = 1$ and the $\beta$-divergence~\citep{basu1998robust} for $\alpha = 1$. Included are the KL divergence $(\alpha = 1$, $\beta = 0)$, reverse KL $(\alpha =0, \beta = 1)$, squared Euclidean distance $(\alpha = 1, \beta = 1)$, Itakura-Saito divergence $(\alpha =1, \beta = -1)$, as well as the 
squared Hellinger distance $(\alpha = \beta = 0.5)$, Neyman $\chi^2$ $(\alpha=-1, \beta=2)$ and Pearson $\chi^2$ divergences $(\alpha = 2, \beta = -1)$. 

When $\alpha \neq 0$, $a(x,y) = x^{\alpha}y^{\beta-1}$ and $b(x,y) = y^{\alpha + \beta - 1}$. When $\alpha = 0$, $\beta = 1$, $a(x,y) = \log(x/y)$ and $b(x,y) = 1$. Otherwise, when $\alpha = 0$ and $\beta \neq 1$, we could not derive a decomposition satisfying the property~\eqref{eq:prop}.~\looseness=-1 

Under the $(\alpha,\beta)$-divergence, $g(\lambda)$ is defined by:
\begin{align}g(\lambda) = 
    \begin{cases}
         \lambda^{1 - \beta} & 1/\alpha - \beta/\alpha > 1 \\
         \lambda^{\alpha + \beta -1} & 1 / \alpha - \beta/ \alpha<0 \\
         \log(\lambda) & \alpha = 0, \beta = 1\\
         \lambda^\alpha & 0 \leq 1/\alpha - \beta/\alpha \leq 1
    \end{cases}
\end{align}

\textbf{Maximum likelihood under the negative binomial.} The negative binomial random variable $X$ with mean $y$ and dispersion parameter $\phi$ has probability mass function
\begin{align}
    p(x \mid y, \phi) = \frac{\Gamma(x + \phi)}{\Gamma(\phi) \Gamma(x + 1)}\left(\frac{\phi}{\phi + y}\right)^{\phi}\left(\frac{y}{\phi + y}\right)^{x}.
\end{align}
The terms in the negative log-likelihood proportional to $y$ are given by
\begin{align}
    \D(x,y) = (\phi + x) \log(\phi + y) - x \log(y),
\end{align}
which decomposes into convex and concave parts
\begin{align}
    \D^{\text{vex}}(x,y) = - x \log(y), \quad \D^{\text{cave}}(x,y) = (\phi + x) \log( \phi + y).
\end{align}
Then 
\begin{align}
\partial_y \D^{\text{vex}}(x, \lambda y) + \partial_y \D^{\text{cave}}(x,y) &= 
    - \lambda^{-1}\frac{x}{y} + \frac{\phi + x}{\phi + y}\\
    &= \lambda^{-1} \left(\lambda \frac{\phi + x}{\phi + y} - \frac{x}{y}\right).
\end{align}
Here, $c(\lambda) = \lambda^{-1}$, $g(\lambda) = \lambda$, $a(x,y) = \frac{x}{y}$ and $b(x,y) = \frac{\phi + x}{\phi + y}$. Under the negative binomial likelihood, where each ${\yi \sim \text{NegBinom}(\mui, \phi_{\obs{\mathbf{i}}})}$. Then for fixed $\phi_{\obs{\mathbf{i}}}$, the multiplicative update that minimizes the negative log-likelihood is~\looseness=-1 
\begin{align}
    \Theta^{\ms{(\ell)}} \leftarrow \max \left(\epsilon, \Theta^{\ms{(\ell)}} \odot \left(\frac{\sum_{\obs{\mathbf{i}}} \left[\nabla_{\Theta^{\ms{(\ell)}}}\mui\right] \frac{\yi}{\mui}}{\sum_{\obs{\mathbf{i}}}\left[\nabla_{\Theta^{\ms{(\ell)}}}\mui\right] \frac{\yi + \phi_{\obs{\mathbf{i}}}}{\mui + \phi_{\obs{\mathbf{i}}}}}\right)\right).
\end{align}
The geometric distribution arises when $\phi = 1$. 

\textbf{Maximum likelihood under the Bernoulli.} The Bernoulli random variable $X \sim \text{Bern}(p)$ has probability mass function 
\begin{align}
    p(x \mid p) = p^{x}(1-p)^{1-x}
\end{align}
for $x \in \{0,1\}$. Reparameterizing using the odds ratio $\mu := \frac{p}{1-p}$, the negative log likelihood simplifies to 
\begin{align}
    \D(x, \mu) = \log(1 + \mu) - x \log(\mu)
\end{align}
which decomposes into convex and concave parts
\begin{align}
    \D^{\text{vex}}(x,\mu) = - x \log(\mu), \quad \D^{\text{cave}}(x,\mu) =  \log( 1 + \mu).
\end{align}
Then 
\begin{align}
\partial_y \D^{\text{vex}}(x, \lambda y) + \partial_y \D^{\text{cave}}(x,y) &= 
    - \lambda^{-1}\frac{x}{y} + \frac{1}{1 + y}\\
    &= \lambda^{-1} \left(\lambda \frac{1}{1 + y} - \frac{x}{y}\right).
\end{align}
Here, $c(\lambda) = \lambda^{-1}$, $g(\lambda) = \lambda$, $a(x,y) = \frac{x}{y}$ and $b(x,y) = \frac{1}{1 + y}$. Under odds estimation using \Cref{eq:contraction_structure}, the multiplicative update that minimizes the negative log-likelihood is 
\begin{align}
    \Theta^{\ms{(\ell)}} \leftarrow \max \left(\epsilon, \Theta^{\ms{(\ell)}} \odot \left(\frac{\sum_{\obs{\mathbf{i}}} \left[\nabla_{\Theta^{\ms{(\ell)}}}\mui\right] \frac{\yi}{\mui}}{\sum_{\obs{\mathbf{i}}}\left[\nabla_{\Theta^{\ms{(\ell)}}}\mui\right] \frac{1}{\mui + 1}}\right)\right).
\end{align}
This framework extends to the binomial setting where $\yi \sim \text{Binomial}(n_{\obs{\mathbf{i}}}, p_{\obs{\mathbf{i}}})$. For known number of trials $n_{\obs{\mathbf{i}}}$, the corresponding update is 
\begin{align}
    \Theta^{\ms{(\ell)}} \leftarrow \max \left(\epsilon, \Theta^{\ms{(\ell)}} \odot \left(\frac{\sum_{\obs{\mathbf{i}}} \left[\nabla_{\Theta^{\ms{(\ell)}}}\mui\right] \frac{\yi}{\mui}}{\sum_{\obs{\mathbf{i}}}\left[\nabla_{\Theta^{\ms{(\ell)}}}\mui\right] \frac{n_{\obs{\mathbf{i}}}}{\mui + 1}}\right)\right).
\end{align}
In each of these settings, $\mui$ is the estimated odds $\mui := \frac{\hat p_{\obs{\mathbf{i}}}}{1- \hat p_{\obs{\mathbf{i}}}}$.

\textbf{Jensen-Shannon divergence. } The Jensen-Shannon divergence is defined as 
\begin{align}
    \D(x,y) &= \tfrac{1}{2} x [\log(x) - \log(\tfrac{x+y}{2})] + \tfrac{1}{2}y[\log(y) - \log(\tfrac{x+y}{2})]\\
    &= - \tfrac{x+y}{2} \log(\tfrac{x+y}{2}) + \tfrac{1}{2} [x \log(x) + y \log(y)].
\end{align}
It has convex-concave decomposition
\begin{align}
    \D^{\text{vex}}(x,y) = \tfrac{1}{2} [x \log(x) + y \log(y)], \quad \D^{\text{cave}}(x,y) = - \tfrac{x+y}{2} \log(\tfrac{x+y}{2}).
\end{align}
Then 
\begin{align}
    \partial_y \D^{\text{vex}}(x,\lambda y) + \partial_y \D^{\text{cave}}(x,y) &= \tfrac{1}{2}(1 + \log(\lambda) + \log(y) - \log(\tfrac{x+y}{2}) - 1)\\
    &= \tfrac{1}{2}[\log(\lambda) - \log(\tfrac{x+y}{2y})].
\end{align}
Then $a(x,y) = \log(\frac{x+y}{2y})$, $b(x,y) = 1$, $g(\lambda) = \log(\lambda)$, and $c(\lambda) = \frac{1}{2}$. The multiplicative update is 
\begin{align}
      \Theta^{\ms{(\ell)}} \leftarrow \max \left(\epsilon, \Theta^{\ms{(\ell)}} \odot \exp\left(\frac{\sum_{\obs{\mathbf{i}}} \left[\nabla_{\Theta^{\ms{(\ell)}}}\mui\right] \log(\frac{\yi+\mui}{2 \mui})}{\sum_{\obs{\mathbf{i}}}\left[\nabla_{\Theta^{\ms{(\ell)}}}\mui\right]}\right)\right).
\end{align}

\newpage 

\section{Additional Empirical Results}\label{sec:apx_results}
\textbf{Uber qualitative comparison. } For comparison, we fit the CP decomposition with $\lat{R}\!=\!10$ classes to the Uber data. On the left, we show the classes recovered by the custom model corresponding to ``weekday morning rush hour", ``weekday evening rush hour", ``weekend night life" in~\Cref{fig:qual}. To the right, we show their most closely resembled classes recovered by CP.
\begin{figure}[!h]
    \centering
\includegraphics[width=\linewidth]{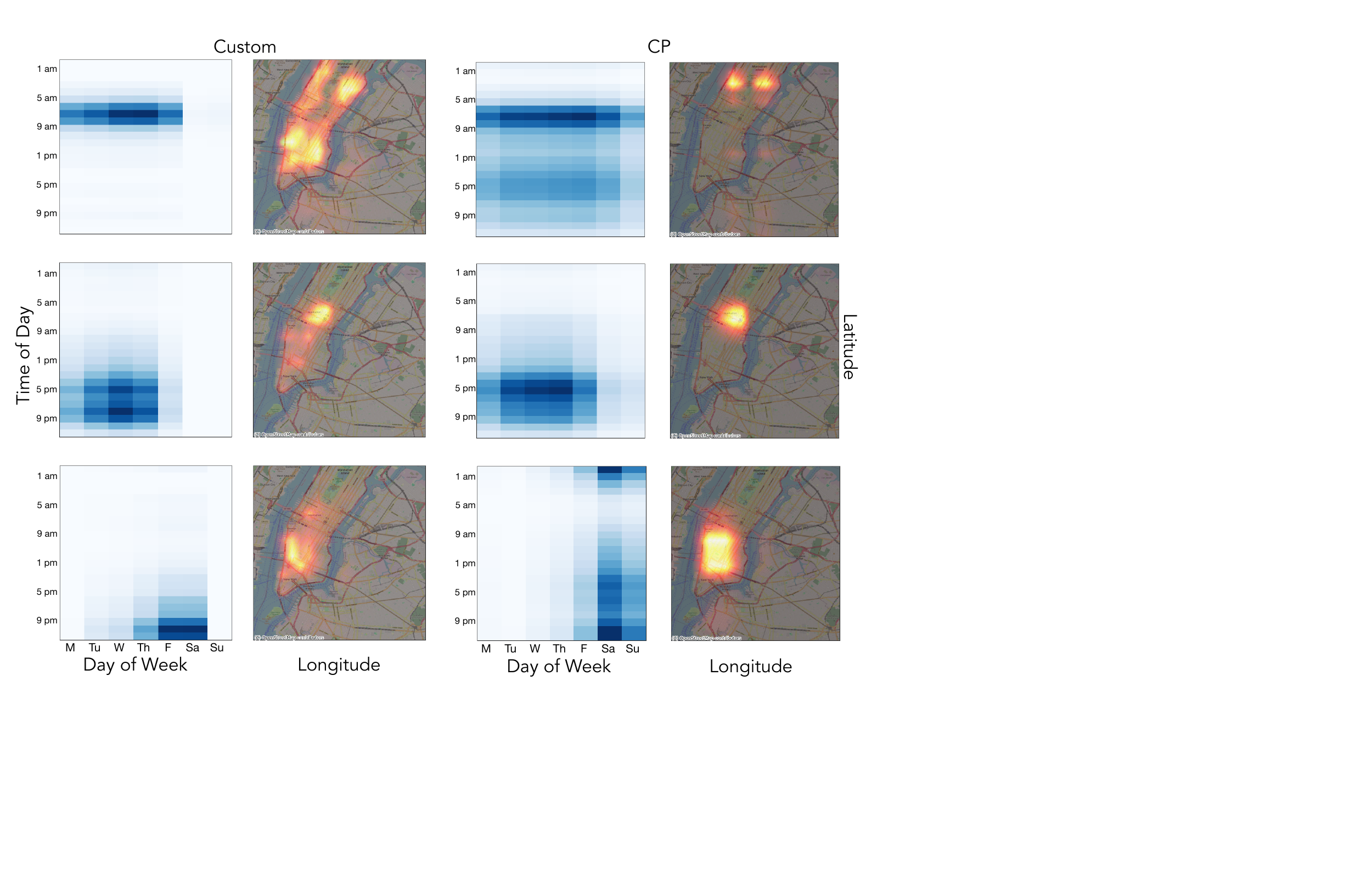}
    \caption{Qualitative side-by-side comparison of the custom (left) and CP (right) models. }
    \label{fig:apx_qual}
\end{figure}

Not only does the custom model capture more complex spatial structure, the temporal structure is much more refined.

\begin{table}[]
\caption{Mean heldout $(\alpha, \beta)$-divergence for all models. We report the best values over different numbers of parameters. Lowest values in each row are bolded. Standard errors are shown in parentheses below each row. }
\label{tab:model}
    \centering
    \scriptsize
    \begin{tabular}{ccccccccc}
    \hline
       Dataset & $\alpha$ & Custom & CP/PARAFAC & Tensor-train & Tucker & LR Tucker & Tucker (cubic) & LR Tucker (cubic) \\
       \hline
Uber & 0.8 & \textbf{0.00804} & 0.00812 & 0.00823 & 0.00986 & 0.00823 & 0.00925 & 0.00828 \\
      &      & {\scriptsize (0.00002)} & {\scriptsize (0.00002)} & {\scriptsize (0.00002)} & {\scriptsize (0.00004)} & {\scriptsize (0.00002)} & {\scriptsize (0.00004)} & {\scriptsize (0.00002)} \\
& 1.0 & \textbf{0.0101} & 0.0104 & 0.0107 & 0.0149 & 0.0107 & 0.0123 & 0.0108 \\
      &      & {\scriptsize (0.00003)} & {\scriptsize (0.00002)} & {\scriptsize (0.00004)} & {\scriptsize (0.00006)} & {\scriptsize (0.00003)} & {\scriptsize (0.00006)} & {\scriptsize (0.00007)} \\
 & 1.2 & \textbf{0.0152} & 0.0158 & 0.0169 & 0.0291 & 0.0160 & 0.0197 & 0.0164 \\
      &      & {\scriptsize (0.00007)} & {\scriptsize (0.00013)} & {\scriptsize (0.00010)} & {\scriptsize (0.00016)} & {\scriptsize (0.00012)} & {\scriptsize (0.00015)} & {\scriptsize (0.00011)} \\

        \hline
ICEWS & 0.8 & 0.0203 & \textbf{0.0201} & 0.0226 & 0.0234 & 0.0212 & 0.0243 & 0.0354 \\
      &      & {\scriptsize (0.00003)} & {\scriptsize (0.00001)} & {\scriptsize (0.00002)} & {\scriptsize (0.00006)} & {\scriptsize (0.00002)} & {\scriptsize (0.00004)} & {\scriptsize (0.00005)} \\
 & 1.0 & \textbf{0.0273} & 0.0285 & 0.0331 & 0.0342 & 0.0291 & 0.0356 & 0.0301 \\
      &      & {\scriptsize (0.00012)} & {\scriptsize (0.00004)} & {\scriptsize (0.00012)} & {\scriptsize (0.00013)} & {\scriptsize (0.00005)} & {\scriptsize (0.00006)} & {\scriptsize (0.00031)} \\

 & 1.2 & \textbf{0.0457} & 0.0470 & 0.0560 & 0.0557 & 0.0595 & 0.0515 & 0.0515 \\
      &      & {\scriptsize (0.0001)} & {\scriptsize (0.0001)} & {\scriptsize (0.001)} & {\scriptsize (0.0002)} & {\scriptsize (0.004)} & {\scriptsize (0.0004)} & {\scriptsize (0.002)} \\

\hline
WITS & 0.8 & \textbf{0.166} & 0.234 & 0.361 & 0.269 & 0.313 & 0.246 & 0.297 \\
     &     & {\scriptsize (0.004)} & {\scriptsize (0.018)} & {\scriptsize (0.038)} & {\scriptsize (0.031)} & {\scriptsize (0.032)} & {\scriptsize (0.008)} & {\scriptsize (0.023)} \\
 & 1.0 & \textbf{0.0356} & 0.0570 & 0.0780 & 0.0782 & 0.0742 & 0.0673 & 0.0739 \\
     &     & {\scriptsize (0.002)} & {\scriptsize (0.005)} & {\scriptsize (0.011)} & {\scriptsize (0.012)} & {\scriptsize (0.011)} & {\scriptsize (0.007)} & {\scriptsize (0.005)} \\
 & 1.2 & \textbf{0.0134} & 0.0178 & 0.0294 & 0.0280 & 0.0293 & 0.0300 & 0.0246 \\
     &     & {\scriptsize (0.0009)} & {\scriptsize (0.002)} & {\scriptsize (0.003)} & {\scriptsize (0.002)} & {\scriptsize (0.002)} & {\scriptsize (0.0003)} & {\scriptsize (0.004)} \\
        \hline
    \end{tabular}
\end{table}

\begin{table}[]
\caption{Mean heldout $(\alpha,\beta)$-divergences for NNEinFact and Adam. Lowest values in each row are bolded. Standard errors are shown in parentheses below each row. }\label{tab:auto}
    \centering
    \scriptsize
    \begin{tabular}{ccccccccc}
    \hline
       Dataset & $(\alpha,\beta)$ & NNEinFact & Adam, 0.01 & Adam, 0.05 & Adam, 0.1  & Adam, 0.3  & Adam, 0.5  & Adam, 1.0\\
       \hline
Uber & (0.7, 0) & \textbf{0.00759} & 0.0161 & 0.0131 & 0.0115 & 0.0100 & 0.00973 & 0.0127 \\
     &           & {\scriptsize (0.00001)} & {\scriptsize (0.0004)} & {\scriptsize (0.0005)} & {\scriptsize (0.0002)} & {\scriptsize (0.0001)} & {\scriptsize (0.0001)} & {\scriptsize (0.0016)} \\

 & (0.7, 1) & \textbf{0.0444} & 0.1855 & 0.1695 & 0.1903 & 1.9391 & 1.9391 & 1.9412 \\
     &           & {\scriptsize (0.0007)} & {\scriptsize (0.011)} & {\scriptsize (0.012)} & {\scriptsize (0.014)} & {\scriptsize (0.014)} & {\scriptsize (0.014)} & {\scriptsize (0.014)} \\

 & (1.0, 0) & \textbf{0.0108} & 0.0229 & 0.0190 & 0.0164 & 0.0149 & 0.0151 & 0.0195 \\
     &           & {\scriptsize (0.00003)} & {\scriptsize (0.0006)} & {\scriptsize (0.0004)} & {\scriptsize (0.0002)} & {\scriptsize (0.0004)} & {\scriptsize (0.0004)} & {\scriptsize (0.0006)} \\

 & (1.0, 1) & \textbf{0.367} & 0.542 & 0.478 & 0.458 & 9.673 & 18.739 & 17.378 \\
     &           & {\scriptsize (0.006)} & {\scriptsize (0.015)} & {\scriptsize (0.012)} & {\scriptsize (0.011)} & {\scriptsize (2)} & {\scriptsize (1.6)} & {\scriptsize (0.1)} \\

 & (1.3, 0) & \textbf{0.0225} & 0.0538 & 0.0418 & 0.0349 & 0.0323 & 0.0295 & 0.0366 \\
     &           & {\scriptsize (0.0001)} & {\scriptsize (0.001)} & {\scriptsize (0.002)} & {\scriptsize (0.001)} & {\scriptsize (0.001)} & {\scriptsize (0.0005)} & {\scriptsize (0.001)} \\

 & (1.3, 1) & 1.032 & 0.939 & 0.917 & \textbf{0.896} & 34.099 & 40.061 & 42.121 \\
     &           & {\scriptsize (0.06)} & {\scriptsize (0.03)} & {\scriptsize (0.03)} & {\scriptsize (0.02)} & {\scriptsize (3.7)} & {\scriptsize (0.9)} & {\scriptsize (0.4)} \\

        \hline
ICEWS & (0.7, 0) & \textbf{0.0185} & 0.0590 & 0.0305 & 0.0279 & 0.0281 & 0.0291 & 0.0292 \\
      &           & {\scriptsize (0.00002)} & {\scriptsize (0.0007)} & {\scriptsize (0.0002)} & {\scriptsize (0.0002)} & {\scriptsize (0.0004)} & {\scriptsize (0.0002)} & {\scriptsize (0.0001)} \\

      & (0.7, 1) & \textbf{0.0926} & 0.5090 & 0.1867 & 0.1655 & 0.1523 & 0.1523 & 0.1523 \\
      &           & {\scriptsize (0.003)} & {\scriptsize (0.013)} & {\scriptsize (0.003)} & {\scriptsize (0.003)} & {\scriptsize (0.004)} & {\scriptsize (0.004)} & {\scriptsize (0.004)} \\

      & (1.0, 0) & \textbf{0.0309} & 0.1178 & 0.0558 & 0.0503 & 0.0498 & 0.0532 & 0.0548 \\
      &           & {\scriptsize (0.00004)} & {\scriptsize (0.002)} & {\scriptsize (0.001)} & {\scriptsize (0.001)} & {\scriptsize (0.001)} & {\scriptsize (0.001)} & {\scriptsize (0.001)} \\

      & (1.0, 1) & \textbf{0.7558} & 2.3336 & 1.2767 & 1.2055 & 1.1669 & 1.1671 & 1.1671 \\
      &           & {\scriptsize (0.06)} & {\scriptsize (0.07)} & {\scriptsize (0.06)} & {\scriptsize (0.06)} & {\scriptsize (0.06)} & {\scriptsize (0.06)} & {\scriptsize (0.06)} \\

      & (1.3, 0) & \textbf{0.0699} & 0.2858 & 0.1370 & 0.1431 & 0.1135 & 0.1263 & 0.1390 \\
      &           & {\scriptsize (0.0004)} & {\scriptsize (0.005)} & {\scriptsize (0.002)} & {\scriptsize (0.02)} & {\scriptsize (0.002)} & {\scriptsize (0.002)} & {\scriptsize (0.001)} \\

      & (1.3, 1) & \textbf{2.0350} & 4.0964 & 3.0809 & 3.0556 & 2.9887 & 2.9890 & 2.9890 \\
      &           & {\scriptsize (0.2)} & {\scriptsize (0.3)} & {\scriptsize (0.3)} & {\scriptsize (0.3)} & {\scriptsize (0.3)} & {\scriptsize (0.3)} & {\scriptsize (0.3)} \\

        \hline
WITS & (0.7, 0) & \textbf{0.0086} & 0.0122 & 0.0099 & 0.0092 & \textbf{0.0086} & \textbf{0.0086} & 0.0196 \\
     &           & {\scriptsize (0.00002)} & {\scriptsize (0.00007)} & {\scriptsize (0.00006)} & {\scriptsize (0.00004)} & {\scriptsize (0.00003)} & {\scriptsize (0.00004)} & {\scriptsize (0.0)} \\

     & (0.7, 1) & 0.0945 & \textbf{0.0736} & 0.1114 & 65.2589 & 0.2793 & 0.4506 & 0.4625 \\
     &           & {\scriptsize (0.01)} & {\scriptsize (0.007)} & {\scriptsize (0.02)} & {\scriptsize (61.0)} & {\scriptsize (0.05)} & {\scriptsize (0.01)} & {\scriptsize (0.01)} \\

     & (1.0, 0) & 0.0132 & 0.0173 & 0.0140 & 0.0135 & 0.0132 & \textbf{0.0131} & nan \\
     &           & {\scriptsize (0.00008)} & {\scriptsize (0.0001)} & {\scriptsize (0.00008)} & {\scriptsize (0.00006)} & {\scriptsize (0.0001)} & {\scriptsize (0.00007)} & {\scriptsize (nan)} \\

     & (1.0, 1) & \textbf{0.8026} & 1.9767 & 5.9140 & 6.9676 & 4.2028 & 4.4954 & 4.6227 \\
     &           & {\scriptsize (0.1)} & {\scriptsize (0.7)} & {\scriptsize (4.0)} & {\scriptsize (6.0)} & {\scriptsize (0.2)} & {\scriptsize (0.2)} & {\scriptsize (0.2)} \\

     & (1.3, 0) & \textbf{0.0312} & 0.0422 & 0.0320 & 0.0314 & 0.0318 & 0.0444 & 0.1213 \\
     &           & {\scriptsize (0.001)} & {\scriptsize (0.001)} & {\scriptsize (0.001)} & {\scriptsize (0.001)} & {\scriptsize (0.002)} & {\scriptsize (0.009)} & {\scriptsize (0.006)} \\

     & (1.3, 1) & \textbf{2.1723} & 240.4041 & 560.7473 & 143.2240 & 433.9093 & 13.5632 & 13.8273 \\
     &           & {\scriptsize (0.4)} & {\scriptsize (210.0)} & {\scriptsize (340.0)} & {\scriptsize (120.0)} & {\scriptsize (400.0)} & {\scriptsize (0.8)} & {\scriptsize (0.9)} \\

        \hline
    \end{tabular}
\end{table}

\begin{table}[!t]
\caption{Mean runtime to convergence (in seconds) for NNEinFact and Adam baselines. We drop Adam, 1.0 from the comparison as it performs poorly relative to other baselines in~\Cref{tab:auto}. Lowest values in each row are bolded. Standard errors are shown in parentheses below each row. }
\label{tab:time}
    \centering
    \scriptsize
    \begin{tabular}{cccccccc}
    \hline
       Dataset & $(\alpha,\beta)$ & NNEinFact & Adam, 0.01 & Adam, 0.05 & Adam, 0.1 & Adam, 0.3 & Adam, 0.5 \\
       \hline
Uber & (0.7, 0) & \textbf{9.98} & 213.84 & 89.54 & 78.25 & 26.02 & 19.79 \\
     &           & {\scriptsize (1.87)} & {\scriptsize (13.96)} & {\scriptsize (6.07)} & {\scriptsize (7.11)} & {\scriptsize (2.36)} & {\scriptsize (2.11)} \\

     & (0.7, 1) & 71.42 & 262.43 & 69.29 & 51.40 & 2.34 & \textbf{1.29} \\
     &           & {\scriptsize (9.62)} & {\scriptsize (23.68)} & {\scriptsize (12.34)} & {\scriptsize (4.64)} & {\scriptsize (0.25)} & {\scriptsize (0.12)} \\

     & (1.0, 0) & 21.60 & 256.78 & 101.52 & 72.28 & 27.42 & \textbf{15.64} \\
     &           & {\scriptsize (3.71)} & {\scriptsize (15.88)} & {\scriptsize (11.11)} & {\scriptsize (5.53)} & {\scriptsize (4.18)} & {\scriptsize (1.10)} \\

     & (1.0, 1) & 136.94 & 256.59 & 103.21 & 79.78 & 151.50 & \textbf{73.97} \\
     &           & {\scriptsize (32.55)} & {\scriptsize (26.54)} & {\scriptsize (11.10)} & {\scriptsize (9.87)} & {\scriptsize (31.83)} & {\scriptsize (9.77)} \\

     & (1.3, 0) & 52.39 & 264.45 & 133.72 & 66.96 & 22.65 & \textbf{18.84} \\
     &           & {\scriptsize (7.48)} & {\scriptsize (17.70)} & {\scriptsize (15.20)} & {\scriptsize (7.50)} & {\scriptsize (2.35)} & {\scriptsize (1.99)} \\

     & (1.3, 1) & 73.26 & 256.46 & 104.08 & \textbf{72.61} & 180.02 & 109.31 \\
     &           & {\scriptsize (32.12)} & {\scriptsize (33.09)} & {\scriptsize (8.70)} & {\scriptsize (14.79)} & {\scriptsize (26.49)} & {\scriptsize (13.29)} \\

        \hline
ICEWS & (0.7, 0) & \textbf{150.34} & 905.15 & 873.73 & 883.17 & 373.39 & 179.04 \\
      &           & {\scriptsize (19.45)} & {\scriptsize (63.21)} & {\scriptsize (29.86)} & {\scriptsize (59.37)} & {\scriptsize (41.83)} & {\scriptsize (14.08)} \\

      & (0.7, 1) & \textbf{17.39} & 1068.02 & 1012.34 & 953.73 & 60.05 & 56.68 \\
      &           & {\scriptsize (2.11)} & {\scriptsize (90.93)} & {\scriptsize (106.03)} & {\scriptsize (94.33)} & {\scriptsize (4.78)} & {\scriptsize (4.75)} \\

      & (1.0, 0) & \textbf{97.24} & 1020.31 & 988.65 & 847.52 & 469.25 & 250.52 \\
      &           & {\scriptsize (11.69)} & {\scriptsize (82.23)} & {\scriptsize (71.19)} & {\scriptsize (26.46)} & {\scriptsize (60.10)} & {\scriptsize (23.01)} \\

      & (1.0, 1) & \textbf{15.03} & 1060.25 & 982.92 & 904.38 & 55.08 & 50.22 \\
      &           & {\scriptsize (1.77)} & {\scriptsize (78.49)} & {\scriptsize (77.31)} & {\scriptsize (73.65)} & {\scriptsize (4.26)} & {\scriptsize (3.85)} \\

      & (1.3, 0) & \textbf{57.32} & 924.56 & 920.01 & 849.07 & 550.83 & 363.19 \\
      &           & {\scriptsize (7.92)} & {\scriptsize (63.12)} & {\scriptsize (57.74)} & {\scriptsize (96.13)} & {\scriptsize (74.36)} & {\scriptsize (43.81)} \\

      & (1.3, 1) & \textbf{11.97} & 1070.40 & 1043.53 & 909.01 & 49.60 & 44.79 \\
      &           & {\scriptsize (1.64)} & {\scriptsize (87.95)} & {\scriptsize (94.82)} & {\scriptsize (66.34)} & {\scriptsize (4.20)} & {\scriptsize (3.40)} \\

        \hline 
WITS & (0.7, 0) & \textbf{134.03} & 346.20 & 360.72 & 334.97 & 304.82 & 305.83 \\
     &           & {\scriptsize (14.21)} & {\scriptsize (8.52)} & {\scriptsize (8.71)} & {\scriptsize (11.75)} & {\scriptsize (21.33)} & {\scriptsize (26.01)}  \\

     & (0.7, 1) & \textbf{24.01} & 320.79 & 306.93 & 298.41 & 330.51 & 253.89 \\
     &           & {\scriptsize (6.34)} & {\scriptsize (8.57)} & {\scriptsize (13.94)} & {\scriptsize (19.42)} & {\scriptsize (7.66)} & {\scriptsize (17.21)}\\

     & (1.0, 0) & \textbf{59.19} & 356.28 & 345.07 & 329.17 & 267.04 & 342.03\\
     &           & {\scriptsize (9.13)} & {\scriptsize (5.54)} & {\scriptsize (15.77)} & {\scriptsize (5.27)} & {\scriptsize (25.07)} & {\scriptsize (7.28)}  \\

     & (1.0, 1) & \textbf{18.81} & 306.27 & 219.45 & 269.30 & 292.06 & 205.46 \\
     &           & {\scriptsize (5.13)} & {\scriptsize (8.53)} & {\scriptsize (32.27)} & {\scriptsize (13.78)} & {\scriptsize (16.01)} & {\scriptsize (19.38)}  \\

     & (1.3, 0) & \textbf{58.32} & 353.93 & 361.27 & 336.98 & 327.41 & 339.76 \\
     &           & {\scriptsize (15.61)} & {\scriptsize (8.55)} & {\scriptsize (6.00)} & {\scriptsize (10.06)} & {\scriptsize (21.99)} & {\scriptsize (21.37)} \\

     & (1.3, 1) & \textbf{17.69} & 345.67 & 319.95 & 310.72 & 295.07 & 193.25  \\
     &           & {\scriptsize (3.85)} & {\scriptsize (11.01)} & {\scriptsize (20.04)} & {\scriptsize (18.11)} & {\scriptsize (19.39)} & {\scriptsize (17.05)} \\
        \hline
    \end{tabular}
\end{table}

\end{document}